\def\isarxiv{1} 
\def\BibTeX{{\rm B\kern-.05em{\sc i\kern-.025em b}\kern-.08em
    T\kern-.1667em\lower.7ex\hbox{E}\kern-.125emX}}
\definecolor{mydarkblue}{rgb}{0,0.08,0.45}
\definecolor{b2}{RGB}{51,153,255}
\definecolor{mygreen}{RGB}{80,180,0}
\newcommand{\Zhao}[1]{\textcolor{b2}{[Zhao: #1]}}
\newcommand{\Aravind}[1]{\textcolor{orange}{[Aravind: #1]}}
\newcommand{\lianke}[1]{\textcolor{red}{[Lianke: #1]}}
\newtheorem{theorem}{Theorem} 
\newtheorem{lemma}[theorem]{Lemma}
\newtheorem{definition}[theorem]{Definition}
\newcommand{\wt}{\widetilde}
\renewcommand{\epsilon}{\varepsilon}
\newcommand{\N}{\mathcal{N}}
\newcommand{\R}{\mathbb{R}}
\renewcommand{\tilde}{\wt}
\newcommand{\poly}{\mathrm{poly}}
\newcommand{\E}{\mathbb{E}}
\newcommand{\Var}{\mathbf{Var}}
\newcommand {\mat}      [1] {\left[\begin{array}{#1}}
\newcommand {\rix}          {\end{array}\right]}
\newcommand{\X}{\mathcal{X}}
\DeclareMathOperator*{\Median}{median}
\newcommand{\linebreakand}{%
  \end{@IEEEauthorhalign}
  \hfill\mbox{}\par
  \mbox{}\hfill\begin{@IEEEauthorhalign}
}
\begin{document}

\title{Online Adaptive Mahalanobis Distance Estimation \thanks{Full version of this paper is available at~\cite{full} due to space limit.}}
\ifdefined\isarxiv

\author{
Lianke Qin\thanks{\texttt{lianke@ucsb.edu}. UC Santa Barbara.}
\and 
Aravind Reddy\thanks{\texttt{aravind.reddy@cs.northwestern.edu}. Northwestern University.}
\and 
Zhao Song\thanks{\texttt{magic.linuxkde@gmail.com}. Adobe Research.} 
}
\date{}

\else 


\author{\IEEEauthorblockN{Lianke Qin}
\IEEEauthorblockA{\textit{Department of Computer Science} \\
\textit{University of California, Santa Barbara}\\
Santa Barbara, CA \\
lianke@ucsb.edu}
\and
\IEEEauthorblockN{Aravind Reddy}
\IEEEauthorblockA{
\textit{Department of Computer Science}\\
\textit{Northwestern University}\\
Evanston, IL \\
aravind.reddy@cs.northwestern.edu}
\and
\IEEEauthorblockN{Zhao Song}
\IEEEauthorblockA{
\textit{Adobe Research}\\
\textit{Adobe}\\
San Jose, CA \\
zsong@adobe.com}
}

\fi

\ifdefined\isarxiv
\begin{titlepage}
  \maketitle
  \begin{abstract}
Mahalanobis metrics are widely used in machine learning in conjunction with methods like $k$-nearest neighbors, $k$-means clustering, and $k$-medians clustering. Despite their importance, there has not been any prior work on applying sketching techniques to speed up algorithms for Mahalanobis metrics. In this paper, we initiate the study of dimension reduction for Mahalanobis metrics. In particular, we provide efficient data structures for solving the Approximate Distance Estimation (ADE) problem for Mahalanobis distances. We first provide a randomized Monte Carlo data structure. Then, we show how we can adapt it to provide our main data structure which can handle sequences of \textit{adaptive} queries and also online updates to both the Mahalanobis metric matrix and the data points, making it amenable to be used in conjunction with prior algorithms for online learning of Mahalanobis metrics.

  \end{abstract}
  \thispagestyle{empty}
\end{titlepage}

{\hypersetup{linkcolor=black}
\tableofcontents
}
\newpage

\else
\maketitle
\begin{abstract}

\end{abstract}

\fi

\section{Introduction}


The choice of metric is critical for the success of a wide range of machine learning tasks, such as $k$-nearest neighbors, $k$-means clustering, and $k$-medians clustering. In many applications, the metric is often not provided explicitly and needs to be learned from data \cite{ssn04, dkjsd07, jkdg08}. The most common class of such learned metrics is the set of Mahalanobis metrics, which have been shown to have good generalization performance. For a set of points $\mathcal{X} = \{x_1, x_2, \dots, x_n\} \in \R^d$, a Mahalanobis metric $d$ on $\X$ is characterized by a positive semidefinite matrix $A \in \R^{d \times d}$ such that $d(x_i, x_j) = \sqrt{(x_i - x_j)^\top A (x_i - x_j)}$. Mahalanobis distances are generalizations of traditional Euclidean distances, and they allow for arbitrary linear scaling and rotations of the feature space.


In parallel to the large body of work on learning Mahalanobis metrics, the use of sketching techniques has also become increasingly popular in dealing with real-world data that is often high-dimensional data and also extremely large in terms of the number of data points. In particular, a large body of influential work has focused on sketching for Euclidean distances \cite{jl84,ams99,ccf02}. Despite the importance of Mahalanobis metrics in practical machine-learning applications, there has not been any prior work focusing on dimension reduction for Mahalanobis distances.

In this paper, we initiate the study of dimension reduction for Mahalanobis distances. In particular, we focus on the Approximate Distance Estimation (ADE) problem \cite{cn20} in which the goal is to build an efficient data structure that can estimate the distance from a given query point to a set of private data points, even when the queries are provided adaptively by an adversary. ADE has many machine learning applications where the input data can be easily manipulated by users and the model accuracy is critical, such as network intrusion detection~\cite{bcm+17, cbk09}, strategic classification~\cite{hmpw16}, and autonomous driving~\cite{pmg16, lcls17, pmg+17}. We formulate the ADE problem with a Mahalanobis distance as:
\begin{definition}[Approximate Mahalanobis Distance Estimation]
For a Mahalanobis distance characterized by a positive semi-definite matrix $A\in \R^{d \times d}$, given a set of data points $\X=\{x_1,x_2,\dots,x_n\} \subset \mathbb{R}^{d}$ and an accuracy parameter $\epsilon \in(0,1)$, we need to output a data structure $\mathcal{D}$ such that: Given only $\mathcal{D}$ stored in memory with no direct access to $\X$, our query algorithm must respond to queries of the form $q \in \mathbb{R}^{d}$ by reporting distance estimates $\tilde{d}_{1}, \ldots, \tilde{d}_{n}$ satisfying
\begin{align*}
    \forall i \in[n],(1-\epsilon)\|q-x_{i}\|_{A} \leq \tilde{d}_{i} \leq(1+\epsilon)\|q-x_{i}\|_{A} .
\end{align*}
\end{definition}


We also consider an online version of the problem, where the underlying Mahalanobis distance changes over iterations. This is an important problem, because the distance metric $A$ is learned from data and can change over time in many applications of Mahalanobis distances. For example, for an Internet image search application, the continuous collection of input images changes the distance metric $A$. We formulate our online version of the ADE problem with Mahalanobis distances as follows:

\begin{definition}[Online Adaptive Mahalanobis Distance Estimation]\label{def:main_problem}
We need to design a data structure that efficiently supports any sequence of the following operations:
\begin{itemize}
    \item \textsc{Initialize}$(U \in \R^{k \times d}, \{ x_1, x_2, \cdots, x_n \} \subset \R^d, \epsilon \in (0,1), \delta \in (0,1))$. The data structure takes $n$ data points $\{x_1, x_2, \dots, x_n\}\subset \R^d$, a $k \times d$ matrix $U$ (which defines the metric with $A = U^\top U$), an accuracy parameter $\epsilon$ and a failure probability $\delta$ as input.
    \item \textsc{UpdateU}$(u \in \R^d, a \in [k])$. Updates the $a$-th row of $k \times d$ matrix $U$.
    \item \textsc{UpdateX}$(z \in \R^d, i \in [n])$. Update the data structure with the $i$-th new data point $z$.
    \item \textsc{QueryPair}$(i,j  \in [n])$ Outputs a number $p$ such that $(1-\epsilon) \| x_i - x_j \|_A \leq p \leq (1+\epsilon) \cdot \| x_i - x_j \|_A $ with probability at least $1 -\delta$.
    \item \textsc{QueryAll}$(q \in \R^d)$. Outputs a set of distance estimates $\{\tilde{d}_1, \cdots, \tilde{d}_n\} \subset \R$ such that $\forall i \in[n], (1-\epsilon)\|q-x_{i}\|_{A} \leq \tilde{d}_{i} \leq(1+\epsilon)\|q-x_{i}\|_{A}$. with probability at least $1 -\delta$.
    \item \textsc{SampleExact}$(q \in \R^d )$. Sample an index $i \in [n]$ with probability $d_i / \sum_{j=1}^n d_j$
\end{itemize}
\end{definition}

Our randomized data structure(Algorithm~\ref{alg:mahalanobis_maintenance},~\ref{alg:query}) supports approximate Mahalanobis distance estimation with online update to the distance metric matrix and data points even for a sequence of adaptively chosen queries. It also supports samples an index $i \in [n]$ with probability $d_i / \sum_{j=1}^n d_j$.

\paragraph{Roadmap.}
In Section~\ref{sec:related_work}, we discuss related works in online metric learning, sketching, and adaptive distance estimation.
In Section~\ref{sec:prelim}, we provide some notation used throughout the paper and also provide some background. 
In Section~\ref{sec:sketch}, we demonstrate how we can use the well-known JL sketch~\cite{jl84} to design a data-structure which can support Mahalanobis distance estimation.
In Section~\ref{sec:adaptive_maintaince}, we present our adaptive Mahalanobis distance maintenance data structure and prove its time and correctness guarantees. 
We provide an experimental evaluation of our data structure in Section~\ref{sec:eval}.
We end with our conclusion in Section~\ref{sec:conclusion}.

\section{Related Work}\label{sec:related_work}

\paragraph{Sketching}
Sketching is a well-known technique to improve performance or memory complexity~\cite{cw13}. It has wide applications in linear algebra, such as linear regression and low-rank approximation\cite{cw13,nn13,mm13,rsw16,swz17,alszz18, qgt+19, swz19_neurips2,djssw19,  hdw20, fqz+21,zyd+22, talla2023map, rsz22, rrs+22, gsz23, gsyz23}, training over-parameterized neural network~\cite{syz21, szz21, zhasks21, qsyz23}, empirical risk minimization~\cite{lsz19, qszz23}, linear programming \cite{lsz19, lrs+21,sy21,lszzz23,gs22,bs23}, distributed problems \cite{wz16,bwz16, jll+21}, clustering~\cite{emz21}, generative adversarial networks~\cite{xzz18}, kernel
density estimation~\cite{qrs+22},
tensor decomposition \cite{swz19_soda}, trace estimation~\cite{jpwz21}, projected gradient descent~\cite{hmr18, xss21}, matrix sensing~\cite{ydh+21, dls23_sensing, qsz23}, softmax regression \cite{mrs20, dhy+22, as23,lsz23,dls23,gsy23,ssz23}, John Ellipsoid computation \cite{ccly19,syyz22}, semi-definite programming \cite{gs22}, kernel methods~\cite{acw17, akkpvwz20, cy21, swyz21}, adversarial training~\cite{gqsw22}, cutting plane method \cite{jlsw20}, discrepancy \cite{z22}, federated learning~\cite{rpuisbga20, swyz23}, kronecker protection maintenance~\cite{syyz23_dp},  reinforcement learning~\cite{ wzd+20,ssx21},  relational database \cite{qjs+22} and attention computation~\cite{qsz23b}.

\subsection{Approximate Adaptive Distance Estimation}
There has been increasing interest in understanding threats associated with the deployment of
algorithms in potentially adversarial settings~\cite{ gss15,hmpw16, lcls17}.
 Additionally, the problem of preserving statistical validity when conducting exploratory data analysis has been extensively studied~\cite{dfh+15a, dfh+15b, dfh+15c, dssu17} where the goal is to maintain coherence with an unknown distribution from which data samples are acquired. In the application of approximate nearest neighbor, the works of~\cite{kle97, kor00, cn20} present non-trivial data structures for distance estimation which supports adaptive queries. In the context of streaming algorithms, the work of~\cite{bejwy20} designs streaming algorithms that support both adaptive queries and updates.

\subsection{Online Metric Learning}
A number of recent techniques consider the metric learning problem~\cite{sj03, gr05,fsm06}. Most works handle offline learning Mahalanobis distances, which often results in expensive optimization algorithms. POLA~\cite{ssn04} is an algorithm for online  Mahalanobis metric learning that optimizes a large-margin objective and establishes provable regret bounds. However, it requires eigenvector computation at each iteration to ensure positive definiteness, which can be time-consuming in practice. The information-theoretic metric learning approach of~\cite{dkjsd07} presents an online algorithm that eliminates eigenvector decomposition operations. LEGO~\cite{jkdg08} requires no additional work for enforcing positive definiteness and can be implemented efficiently in practice. \cite{qjzl15} leverages random projection in distance metric learning for high-dimensional data.

\section{Preliminaries}\label{sec:prelim}
\subsection{Notation}
For any natural number $n$, we use $[n]$ to denote the set $\{1,2,\dots,n\}$.
We use $A^\top$ to denote the transpose of matrix $A$. For any vector $v \in \R^d$ and positive semi-definite matrix $A \in \R^{d\times d}$, we use $\|v\|_{A}$ to denote $\sqrt{v^\top A v}$. We use ${\N}(0,I)$ to denote Gaussian distribution. For a probabilistic event $f(x)$, we define ${\bf 1}\{f(x)\}$ such that ${\bf 1}\{f(x)\} = 1$ if $f(x)$ holds and ${\bf 1}\{f(x)\} = 0$ otherwise. For a vector $v \in \R^d$ and real valued random variable $Y$, we will abuse
notation and use $\Median(v)$ and $\Median(Y)$ to denote the median of the entries of $v$ and the distribution
of $Y$ respectively. We use $\Pr[]$ to denote the probability, and use $\E[]$ to denote the expectation if it exists. We use $\Var[]$ to denote the variance of a random variable. We use $\wt{O}(f)$ to denote $O(f \poly(\log f))$.

\subsection{Background}
In this section, we provide several  definitions and probability results. 

\begin{definition}[Low-Rank Mahalanobis Pseudo-Metric]
For any set of points $\X \subset \R^d$, a pseudo-metric is a function $d:\X \times \X \mapsto \R_{\geq 0}$ such that $\forall x,y,z \in \X$, it satisfies: $d(x,x) = 0, d(x,y) = d(y,x), d(x,y) \leq d(x,z) + d(z,y)$.

As in some prior work on pseudo-metric learning \cite{ssn04}, we only consider pseudo-metrics which can be written in the form $d_A(x,y) = (x-y)^\top A (x-y)$ where $A$ is a positive semi-definite matrix. We define the rank of a pseudo-metric $d_A$ to be the rank of $A$.
\end{definition}

Note that the distinction between pseudo-metrics and metrics is not very crucial for our results. In addition to the above properties, a metric satisfies the property that for any $x,y \in \X, d(x,y) = 0$ if and only if $x = y$. 

We will use the following property from \cite{cn20} for our sketching matrices $\{\Pi_j \in \R^{m \times k}\}_{j=1}^L$ where $L$ denotes the number of sketching matrices.

\begin{definition}[$(\epsilon,\beta)$-representative, Definition B.2 in~\cite{cn20}]\label{def:eps_representative}
   A set of matrices $\{\Pi_{j} \in \R^{m \times k}\}_{j=1}^{L}$ is said to be $(\epsilon,\beta)$-representative 
   for any $\epsilon, \beta \in (0,1/2)$ if
\begin{align*}
  \forall\ \|v\|_2=1,\ \sum_{j=1}^{L} {\bf 1}\{  (1 - \epsilon) \leq \|\Pi_{j} v\|_2 \leq (1 + \epsilon)\}  \geq (1-\beta) L .  
\end{align*}
\end{definition}
The above definition implies that if a set of matrices $\{\Pi_{j} \in \R^{m \times k}\}_{j=1}^{L}$ satisfies this property, for any any unit vector $v$, most of the projections $\Pi_j v$, approximately preserve its length. 


We will make use of Hoeffding’s Inequality:
\begin{lemma}[Hoeffding’s Inequality~\cite{h63}]\label{thm:blm13_hoeffding_inequality}
   Let $X_{1}, \ldots, X_{n}$ be independent random variables such that $X_{i} \in\left[a_{i}, b_{i}\right]$ almost surely for $i \in[n]$ and let $S=\sum_{i=1}^{n} X_{i}-\E[X_{i}]$. Then, for every $t>0$ :
\begin{align*}
   \Pr[S \geq t] \leq \exp \left(-\frac{2 t^{2}}{\sum_{i=1}^{n}\left(b_{i}-a_{i}\right)^{2}}\right) .
\end{align*}
\end{lemma}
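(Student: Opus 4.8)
The plan is to prove this via the exponential moment method (the Chernoff bounding technique). For any $s > 0$, I would rewrite the tail event as $\Pr[S \geq t] = \Pr[e^{sS} \geq e^{st}]$ and apply Markov's inequality to the nonnegative random variable $e^{sS}$, yielding $\Pr[S \geq t] \leq e^{-st}\,\E[e^{sS}]$. The problem then reduces to controlling the moment generating function $\E[e^{sS}]$ of the centered sum and afterwards optimizing over the free parameter $s$.

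Next, I would exploit independence. Writing $S = \sum_{i=1}^{n}(X_i - \E[X_i])$ and using that the $X_i$ are independent, the expectation of the product factorizes as $\E[e^{sS}] = \prod_{i=1}^{n} \E[e^{s(X_i - \E[X_i])}]$. This converts the global problem into $n$ independent one-dimensional problems, each asking for a bound on the exponential moment of a single centered, bounded random variable.

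The heart of the argument, and the step I expect to be the main obstacle, is the per-variable bound, i.e. \emph{Hoeffding's Lemma}: for a random variable $Y$ with $\E[Y] = 0$ and $Y \in [a,b]$ almost surely, one has $\E[e^{sY}] \leq \exp(s^2 (b-a)^2 / 8)$. I would establish this by studying the cumulant generating function $\psi(s) := \log \E[e^{sY}]$. One checks directly that $\psi(0) = 0$ and $\psi'(0) = \E[Y] = 0$, and then computes $\psi''(s)$, which equals the variance of $Y$ computed under the exponentially tilted measure whose density with respect to $\Pr$ is proportional to $e^{sY}$. Since this tilted measure remains supported on $[a,b]$, its variance is at most $(b-a)^2/4$, the maximal variance of any distribution on an interval of length $b-a$, attained by the two-point mass at the endpoints. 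A second-order Taylor expansion of $\psi$ about $0$, combined with this uniform bound $\psi'' \leq (b-a)^2/4$, then gives $\psi(s) \leq s^2 (b-a)^2 / 8$, which is exactly the claimed inequality.

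Applying this lemma to each factor with $Y = X_i - \E[X_i]$ and interval length $b_i - a_i$ yields $\E[e^{sS}] \leq \exp(s^2 \sum_{i=1}^{n}(b_i - a_i)^2 / 8)$, so that $\Pr[S \geq t] \leq \exp(-st + s^2 \sum_{i=1}^{n}(b_i - a_i)^2 / 8)$. The final step is to optimize over $s > 0$: the exponent is a convex quadratic in $s$, minimized at $s^\star = 4t / \sum_{i=1}^{n}(b_i - a_i)^2$, and substituting $s^\star$ back collapses the bound to $\exp(-2t^2 / \sum_{i=1}^{n}(b_i - a_i)^2)$, completing the proof.
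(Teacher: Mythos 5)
Your proof is correct and is the standard Chernoff--Hoeffding argument: Markov's inequality applied to $e^{sS}$, factorization by independence, Hoeffding's lemma $\E[e^{sY}]\leq \exp(s^{2}(b-a)^{2}/8)$ for centered bounded $Y$ (via the second derivative of the cumulant generating function and the variance bound $(b-a)^{2}/4$ for the tilted measure), and optimization at $s^{\star}=4t/\sum_{i}(b_i-a_i)^{2}$. The paper does not prove this lemma itself --- it imports it directly from Hoeffding's 1963 paper --- and your argument is precisely the classical proof given there, so there is nothing to reconcile.
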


We will use the following guarantee (restated in our notation) from \cite{cn20},
\begin{lemma}\label{lem:epsilon_representative}
Let $\beta \in [ 0.1,0.2]$. For any accuracy parameter $\epsilon \in (0,1)$ and failure probability $\delta \in (0,1)$, if $m = \Omega(\frac{1}{\epsilon^2}), L = \Omega((d + \log(1/\delta)) \log(d/\epsilon))$, the set of sketching matrices $\{\Pi_{j} \in \R^{m \times k}\}_{j=1}^{L}$ where every entry of each matrix is drawn i.i.d. from $\N(0,1/m)$ satisfies:
\begin{align*}
    \forall\|v\|_2 = 1,\ \sum_{j=1}^{L} {\bf 1}\{ (1 - \epsilon) \leq \|\Pi_{j} v\|_2 \leq (1 + \epsilon)\} \geq (1-\beta) L
\end{align*}
with probability at least $1-\delta$.
\end{lemma}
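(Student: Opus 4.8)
The plan is to combine a single-sketch Johnson--Lindenstrauss concentration estimate with an aggregation step across the $L$ sketches, using the Hoeffding inequality already recorded in Lemma~\ref{thm:blm13_hoeffding_inequality}, and then to pass from one fixed direction to all unit vectors simultaneously by a net argument. This is exactly the structure needed to certify the $(\epsilon,\beta)$-representative property of Definition~\ref{def:eps_representative}.

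First I would fix a single unit vector $v$ and a single sketch $\Pi_j$. Since each entry of $\Pi_j$ is i.i.d.\ $\N(0,1/m)$, the vector $\Pi_j v$ has i.i.d.\ $\N(0,1/m)$ coordinates, so $\|\Pi_j v\|_2^2$ is distributed as $\tfrac1m \chi^2_m$ with mean $1$. Standard chi-squared concentration then yields a constant $c>0$ with $\Pr[(1-\epsilon)\le \|\Pi_j v\|_2\le (1+\epsilon)]\ge 1-2e^{-c m \epsilon^2}$. Choosing the hidden constant in $m=\Omega(1/\epsilon^2)$ large enough drives the per-sketch failure probability to a constant $\gamma$ strictly below $\beta$, say $\gamma\le 0.05\le\beta-0.05$.

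Next I would aggregate over $j\in[L]$. Writing $Y_j:=\mathbf{1}\{(1-\epsilon)\le\|\Pi_jv\|_2\le(1+\epsilon)\}$, the $Y_j$ are independent across $j$ with $\E[1-Y_j]\le\gamma$, so applying Lemma~\ref{thm:blm13_hoeffding_inequality} to $\sum_{j}(1-Y_j)\in[0,1]^{\otimes L}$ with threshold $t=(\beta-\gamma)L$ gives
\begin{align*}
\Pr\Big[\sum_{j=1}^L (1-Y_j)\ge \beta L\Big]\le \exp\big(-2(\beta-\gamma)^2 L\big)=e^{-\Omega(L)},
\end{align*}
so for any fixed $v$ the representative count $\sum_j Y_j\ge(1-\beta)L$ fails with probability only $e^{-\Omega(L)}$. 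Finally I would upgrade ``for a fixed $v$'' to ``for all unit $v$'' by an $\eta$-net argument: take an $\eta$-net $\mathcal N$ of the relevant unit sphere, of cardinality $|\mathcal N|\le(1+2/\eta)^{d}$ (the exponent being the dimension of the subspace spanned by the directions that must be preserved), run the two previous steps at the tightened accuracy $\epsilon/2$, and union bound over $\mathcal N$. This fails with probability at most $|\mathcal N|\,e^{-\Omega(L)}\le\delta/2$ once $L=\Omega\big((d+\log(1/\delta))\log(d/\epsilon)\big)$ and $\eta=\poly(\epsilon/d)$.

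I expect the transfer from the net to every unit vector to be the main obstacle, because the indicator $\mathbf{1}\{\cdots\}$ is discontinuous in $v$, so a naive perturbation bound does not automatically preserve the count. To handle it I would take the nearest net point $u$ to an arbitrary unit $v$ (so $\|v-u\|_2\le\eta$) and use $\big|\,\|\Pi_j v\|_2-\|\Pi_j u\|_2\,\big|\le\|\Pi_j\|_{\mathrm{op}}\,\eta$, which shows that any sketch good for $u$ at level $\epsilon/2$ stays good for $v$ at level $\epsilon$ provided $\|\Pi_j\|_{\mathrm{op}}\,\eta\le\epsilon/2$. A separate union bound over the high-probability operator-norm estimate $\|\Pi_j\|_{\mathrm{op}}=O(1+\sqrt{k/m})$ for all $j$, with its failure absorbed into the remaining $\delta/2$, lets me fix $\eta$ accordingly; then the count of good sketches for $v$ is at least the count for $u$, namely $\ge(1-\beta)L$, which is precisely the claimed property.
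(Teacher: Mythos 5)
Your proof is correct, but note that the paper does not actually prove this lemma: it is imported verbatim (``restated in our notation'') from \cite{cn20}, so there is no in-paper argument to compare against. Your three-step structure --- per-direction chi-squared concentration for a single Gaussian sketch, Hoeffding aggregation of the indicators $Y_j$ across the $L$ independent sketches to get failure probability $e^{-\Omega(L)}$ for a fixed direction, and an $\eta$-net plus union bound with the accuracy tightened to $\epsilon/2$ and the discontinuity of the indicator handled via an operator-norm perturbation bound --- is the standard way to establish this property and is essentially the argument given in the cited work, so this is a faithful reconstruction rather than a new route. One point worth making explicit: as stated, the lemma quantifies over all unit $v\in\R^k$ while $L$ depends only on $d$, which cannot be right if $k\gg d$; your parenthetical choice of net cardinality $(1+2/\eta)^{d}$ over the $d$-dimensional subspace of directions actually queried (the column space of $U$, as used in Lemma~\ref{lem:query_all}) is the correct reading, and you should likewise bound $\|\Pi_j\|_{\mathrm{op}}$ restricted to that subspace by $O(1+\sqrt{d/m})$ rather than $O(1+\sqrt{k/m})$, so that $\log(1/\eta)=O(\log(d/\epsilon))$ and the union bound lands exactly on the stated $L=\Omega((d+\log(1/\delta))\log(d/\epsilon))$ without a spurious $\log k$.
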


The above lemma implies that with high probability, the sketch matrices $\{\Pi_{j}\}_{j=1}^{L}$ is $(\epsilon, \beta)$-representative.

\subsection{Applications of our Data-Structure}\label{sec:application}
Developing efficient algorithms for Nearest Neighbor Search (NNS) has been a focus of intense research, driven by a variety of real-world applications ranging from computer
vision, information retrieval to database query~\cite{bm01, diim04, sdi08}. A line of work, starting with the foundational results of~\cite{im98, kor00}, have obtained sub-linear query time in $n$ for the approximate variant in order to solve the problem that fast algorithms for exact NNS~\cite{cla88, mei93} consumes impractical space complexity. The Locality Sensitive Hashing (LSH)
approach of~\cite{im98} gives a Monte Carlo randomized approach with low memory and query time, but it does not support adaptive queries. Although the algorithms of~\cite{kor00, kle97} support adaptive queries and have sublinear query time, they use large space and are only designed for finding the approximate single nearest neighbor and do not provide distance estimates to all data points in the database per query.

For nonparametric models including SVMs, distance estimates
to potentially every point in the dataset may be required~\cite{ alt92, ams97, hss08}. For
simpler tasks like $k$-nearest neighbor classification or database search, modifying
previous approaches to return $k$ nearest neighbors instead of $1$, results in a factor $k$ increase in overall query time. Therefore, developing efficient deployable variants in adversarial settings is an important endeavor, and an adaptive approximate distance estimation procedure is a useful primitive for this purpose.

\section{Technique Overview}
 
For a set of points $\mathcal{X} = \{x_1, x_2, \dots, x_n\} \in \R^d$, a Mahalanobis distance metric $d$ on $\X$ is characterized by a positive semidefinite matrix $A \in \R^{d \times d}$ such that 
\begin{align*}
d(x_i, x_j) = \sqrt{(x_i - x_j)^\top A (x_i - x_j)}.
\end{align*}
First we provide a randomized sketching data structure $\mathcal{D}$ which can answer approximate non-adaptive Mahalanobis distance estimation queries  $q \in \R^d$ by  $ \tilde{d}_{i} = \|\Pi Uq-\tilde{x}_{i}\|_2$ in Theorem~\ref{thm:jl_guarantees}, where $\Pi$ is a Johnson-Lindenstrauss sketching~\cite{jl84} matrix  and $U^{\top} U = A$.

To make our data structure resistant to adaptively chosen queries,
we will use the $(\epsilon, \beta)$-representative in Definition~\ref{def:eps_representative}
where A set of matrices $\{\Pi_{j} \in \R^{m \times k}\}_{j=1}^{L}$ is said to be $(\epsilon,\beta)$-representative for any $\epsilon, \beta \in (0,1/2)$ if at least $(1-\beta) L$ sketching matrice can achieve $(1 \pm \epsilon)$-approximate distance estimation.
With 
\begin{align*}
L = O((d+ \log \frac{1}{\delta}) \log(d/\epsilon))
\end{align*}
independent instantiation of randomized sketching data structure $\mathcal{D}$ satisfying the  $(\epsilon, 0.1)$-representative property, 
we know that for any given query vector $q \in \R^d$, most of the sketching matrices approximately preserves its length. 
Then we design a set of unit vectors $\{v_i = \frac{U(q-x_i)}{\|U(q-x_i)\|_2}\}_{i=1}^n$ for query $q \in \R^d$ and data points $\{x_1, x_2, \dots, x_n\}\subset \R^d$ and show that most of the projections $\Pi_j$ satisfy $(1 \pm \epsilon)$-approximate Mahalanobis distance estimation.
We define a set $\mathcal{J}$ as $\mathcal{J} \coloneqq \{j:  (1 - \epsilon) \leq \|\Pi_{j} U v\| \leq (1 + \epsilon)\}$ which has size at least $0.9L$. 
Now query using any randomized sketching matrix $\Pi_j$ gives us $\epsilon$-approximation with constant $0.9$ success probability. 
To boost the success probability from constant probability to high probability,
we sample $R = O(\log(n/\delta))$ indexes $\{j_r\}_{r=1}^{R}$ from the set $\mathcal{J}$, and obtain $R$ estimates of the Mahalanobis distance between $q$ and $x_i$.
By Hoeffding's Inequality (Lemma~\ref{thm:blm13_hoeffding_inequality}), we know that the median of the sampled $R$ distance estimates can provide $(1 \pm \epsilon)$-approximate query with high probability. 

To support sparse update for the Mahalanobis metric matrix $U$, we update $\tilde{x}_{i,j}$ with the sparse update matrix $B$ for $n$ data points and $L$ sketching matrice respectively in $O((m+d)nL)$ time.
To update the $i$-th data point with a new vector $z \in \R^d$, we need to recompute  $\tilde{x}_{i,j} = \Pi_j U z$ for $L$ sketching matrice respectively in $O((m+d)kL)$ time.

To support sampling an index $i$ with probability proportional to the distance between $q$ and $x_i$, we first leverage a segment tree to obtain $\sum_{\ell = i}^{j} x_{\ell}$ in logarithmic time, and compute $\sum_{\ell = i}^{j} \| q - x_{\ell}\|_{A}$. In each iteration, we determine left or right half to proceed by sampling a random value between $[0,1]$ uniformly and compare with $\frac{\sum_{\ell = l}^{m} \| q - x_{\ell}\|_{A}}{\sum_{\ell = l}^{r} \| q - x_{\ell}\|_{A}}$. After $T = O(\log n)$ iterations, we can obtain the final sampled index in $O(\log^2 n + kd \log n)$ time.

\section{JL sketch for approximate Mahalanobis distance estimation}~\label{sec:sketch}
In this section, we provide a data structure which uses the well studied Johnson-Lindenstrauss sketch~\cite{jl84} to solve the approximate Mahalanobis distance estimation problem. We remark that other standard sketches like the AMS sketch~\cite{ams99} or \textsc{CountSketch}~\cite{ccf02} can also be used to provide similar data-structures which can support Mahalanobis distance estimation with Monte Carlo guarantees.

\begin{algorithm*}[h]\caption{Informal version of Algorithm~\ref{alg:jl_guarantees_full}. JL sketch for approximate Mahalanobis distance estimation}
\label{alg:jl_guarantees}
\begin{algorithmic}[1]
\State {\bf data structure} {\sc JLMonCarMaintenance} \Comment{Theorem~\ref{thm:jl_guarantees}}
\State {\bf members}
\State \hspace{4mm} $d, k, m, n \in \mathbb{N}_{+}$ \Comment{$d$ is the dimension of data, $m$ is the sketch size, $n$ is the number of points}
\State \hspace{4mm} $U \in \R^{k \times d}$
\State \hspace{4mm} $\{\tilde{x}_{i}\} \in \R^{m}$ for $i \in [n]$ \Comment{Sketch of the $n$ points}
\State {\bf end members}
\State
\Procedure{Initialize}{$U \in \R^{k \times d}, \{x_i\}_{i\in [n]} \subset \R^d, \epsilon \in (0,1), \delta \in (0,1)$} 
    \State $m \gets  \Omega(\frac{\log n }{\epsilon^2})$  \Comment{Initialize the sketch size}
    \State $U \gets U$
    \State For $i \in [n]$, $x_i \gets x_i$
    \State Let $\Pi \in \R^{m \times k}$ with entries drawn iid from $\mathcal{N}(0,1/m)$. \Comment{AMS or CountSketch also valid}
    \State For $i \in [n]$, $\tilde{x}_{i} \gets \Pi Ux_i$  \label{alg:jl_sketch_init_matrix_mul:informal}
\EndProcedure
\State
\Procedure{Query}{$q \in \R^d$}
    \State For $i \in [n]$, let $ \tilde{d}_{i} \gets \|\Pi Uq-\tilde{x}_{i}\|_2$\label{alg:jl_sketch_query_matrix_mul:informal}
    \State \Return $\{\tilde{d}_i\}_{i=1}^n$
\EndProcedure
\State {\bf end data structure}
\end{algorithmic}
\end{algorithm*}

\begin{theorem}[Guarantees for JL sketch for Mahalanobis Metrics]\label{thm:jl_guarantees}
There is a data structure (Algorithm~\ref{alg:jl_guarantees}) for Approximate Mahalanobis Distance Estimation with the following procedures:
\begin{itemize}
    \item \textsc{Initialize}$(U \in \R^{k \times d}, \{x_1, x_2, \dots, x_n\}\subset \R^d, \epsilon \in (0,1), \delta \in (0,1))$: Given a matrix $U \in \R^{k \times d}$, a list of vectors $\{x_1, x_2, \cdots, x_n\} \subset \R^d$, accuracy parameter $\epsilon \in (0,1)$, and failure probability $\delta \in (0,1)$, the data-structure pre-processes in time $\wt O((d+m)k)$.
    \item \textsc{Query}$(q \in \R^d)$: Given a query vector $q \in \R^d$, Query outputs approximate Mahalanobis distance estimates $\{d_i\}_{i=1}^n$ in time $O((d+m)k)$ such that 
    \begin{align*}
         & ~ \Pr \Big[ (1 - \epsilon)\|q-x_i\|_A \leq   d_i \leq (1 + \epsilon)\|q-x_i\|_A  ,  \forall i \Big] \\ 
         & ~ \geq 1 - \delta .
    \end{align*}
    where $A = U^\top U \in \R^{d \times d}$ is a positive semidefinite matrix which characterizes the Mahalanobis distances. Note that the above $\forall i$ indicates for all $i \in [n]$.
\end{itemize}
\end{theorem}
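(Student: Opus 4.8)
The plan is to reduce the Mahalanobis estimation problem to a standard Euclidean Johnson--Lindenstrauss argument via the factorization $A = U^\top U$. First I would record the key identity that for every $i$,
\begin{align*}
\|q - x_i\|_A^2 = (q-x_i)^\top U^\top U (q-x_i) = \|U(q-x_i)\|_2^2,
\end{align*}
so that $\|q-x_i\|_A = \|U(q-x_i)\|_2$. By linearity of the sketch, the quantity returned by \textsc{Query} is
\begin{align*}
\tilde{d}_i = \|\Pi U q - \tilde{x}_i\|_2 = \|\Pi U q - \Pi U x_i\|_2 = \|\Pi U(q-x_i)\|_2 .
\end{align*}
Hence the problem collapses to showing that the single random Gaussian map $\Pi \in \R^{m\times k}$ simultaneously preserves the Euclidean norms of the $n$ fixed vectors $v_i \coloneqq U(q-x_i)$ up to a factor $(1\pm\epsilon)$.

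For the correctness part I would invoke the distributional JL property of $\Pi$: for any fixed vector $v$, the random variable $\|\Pi v\|_2^2$ is a scaled chi-squared concentrating around $\|v\|_2^2$, and choosing $m = \Omega(\epsilon^{-2}\log(1/\delta'))$ gives $(1-\epsilon)\|v\|_2 \le \|\Pi v\|_2 \le (1+\epsilon)\|v\|_2$ with probability at least $1-\delta'$. Since the query $q$ and the data points are fixed independently of $\Pi$ (this is the non-adaptive, Monte Carlo setting), I would apply this bound to each $v_i$ with $\delta' = \delta/n$ and take a union bound over $i \in [n]$; this requires $m = \Omega(\epsilon^{-2}\log(n/\delta))$, consistent with the choice $m = \Omega(\epsilon^{-2}\log n)$ in the algorithm when $\delta$ is taken polynomially small in $n$. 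The union bound yields the stated guarantee that all $n$ estimates are $(1\pm\epsilon)$-accurate simultaneously with probability at least $1-\delta$.

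For the running time I would analyze each operation by splitting the multiplication by $\Pi U$ into two stages. In \textsc{Initialize}, computing $U x_i$ costs $O(kd)$ and then applying $\Pi$ costs $O(mk)$, so forming each sketch $\tilde{x}_i = \Pi U x_i$ takes $O((d+m)k)$ time. In \textsc{Query}, the dominant cost is computing $\Pi U q$ once in $O((d+m)k)$ time, after which each residual norm $\|\Pi U q - \tilde{x}_i\|_2$ is evaluated in $O(m)$ time.

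The part requiring the most care is the probabilistic step: obtaining a clean single-vector JL tail bound (chi-squared concentration of $\|\Pi v\|_2^2$) and then tracking the dependence of $m$ on $\log(n/\delta)$ through the union bound, so that the failure probability over all $n$ points is controlled. Everything else --- the algebraic reduction through $A = U^\top U$ and the per-operation time accounting --- is routine. I would emphasize that this argument only delivers the Monte Carlo guarantee for a query fixed in advance; handling adaptively chosen queries is deferred to the $(\epsilon,\beta)$-representative construction of the later section.
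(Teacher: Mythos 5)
Your proposal is correct and follows essentially the same route as the paper: reduce to Euclidean JL via $\|q-x_i\|_A = \|U(q-x_i)\|_2$, apply a single-vector Gaussian JL concentration bound (the paper derives it from the Gaussian Annulus Theorem, which is the same chi-squared concentration you cite), union bound over the $n$ unit vectors $U(q-x_i)/\|U(q-x_i)\|_2$ to get $m = \Omega(\epsilon^{-2}\log(n/\delta))$, and account for the running time by splitting $\Pi U$ into the two stages costing $O(kd)$ and $O(mk)$. No gaps.
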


\begin{proof}

The \textsc{Initialize} pre-processing time complexity is dominated by line~\ref{alg:jl_sketch_init_matrix_mul} which takes $O((d+m)k)$.
The \textsc{Query} time complexity is dominated by line~\ref{alg:jl_sketch_query_matrix_mul} which takes $O((d+m)k)$.

{\bf Proof of correctness for Query:} We will use the following lemma which is a standard result used in proofs of the Johnson-Lindenstrauss lemma.

\begin{lemma}[Distributional JL (DJL) lemma]\label{lem:dist_jl}
For any integer $k \geq 1$ and any $0 < \epsilon, \delta < 1/2$, there exists a distribution $D_{\epsilon,\delta}$ over $m \times k$ real matrices for some $m \leq c\epsilon^{-2} \log (1/\delta)$ for some constant $c > 0$ such that:
\begin{align*}
     \forall\ \|u\|_2 = 1, \Pr_{\Pi \sim D_{\epsilon,\delta}} [(1-\epsilon) \leq \|\Pi u\|_2 \leq (1 + \epsilon)] \geq 1 - \delta .
\end{align*}
\end{lemma}
We will show that our choice of $\Pi \in \R^{m \times k}$ with entries drawn iid from $\N(0,1/m)$ satisfies the above guarantee. 

Let $u = (u_1,u_2, \dots, u_k)$. Since $\Pi_{i,j} \sim \N(0,1/m)$ for all $i \in [m], j \in [k]$, we have that each coordinate $i \in [m]$ of $(\Pi u)_i = \sum_{j=1}^k \Pi_{i,j} u_j$ is the sum of $k$ independent normally distributed random variables. Therefore, $\{(\Pi u)_i\}_{i=1}^m$ are are all normally distributed real variable with their mean and variance being the sums of the individual means and variances. Note that $\E[(\Pi u)_i] = \sum_{j=1}^k u_j \E[\Pi_{i,j}] = 0$ and $\Var[(\Pi u)_i] = \sum_{j=1}^k u_j^2 \Var[\Pi_{i,j}] = (1/m) \cdot \|u\|_2^2 = 1/m$. Therefore, $(\Pi u)_i \sim \N(0, 1/m)$. A random vector in $\R^m$ with all its coordinates distributed as $\N(0,\sigma^2)$ can be viewed as being drawn from a $m$-dimensional spherical Gaussian $\N(\mathbf{0},\sigma^2 I_{m \times m})$. Therefore, we have $(\Pi u) \sim \N(\mathbf{0},\frac{1}{m} I_{m \times m})$. We will now show how to use the Gaussian Annulus Theorem \cite[Theorem 2.9]{bhk20} to finish our proof.

\begin{lemma}[Gaussian Annulus Theorem]
For any $x \sim \N(\mathbf{0},I_{m \times m}), \beta \leq \sqrt{m}$, for some constant $c > 0$, we have that:
\begin{align*}
 \Pr \Big[(\sqrt{m} - \beta) \leq \|x\|_2 \leq (\sqrt{m} + \beta ) \Big] \geq 1 - 3^{-c\beta^2} .
\end{align*}
\end{lemma}

We have that $(\Pi u) \sim \N(\mathbf{0},\frac{1}{m} I_{k \times k})$. Therefore, we have $(\Pi u) = \frac{1}{\sqrt{m}} x$ for $x \sim \N(\mathbf{0},I_{k \times k})$. The condition that $(1 - \epsilon) \leq  \|\Pi u\|_2 \leq (1 + \epsilon)$ is equivalent to $(\sqrt{m} - \epsilon \sqrt{m}) \leq \|x\|_2 \leq (\sqrt{m} + \epsilon \sqrt{m})$. Therefore, we have that, for some constant $c > 0$,
\begin{align*}
  \Pr_{\Pi \sim D_{\epsilon,\delta}} \Big[ (1 - \epsilon) \leq  \|\Pi u\|_2 \leq (1 + \epsilon) \Big] \geq 1 - 3^{-cm\epsilon^2}.
\end{align*}

To prove the correctness for our Query operation, we will apply a union bound over the choices of $\{u_i = \frac{U(q-x_i)}{\|U(q-x_i)\|_2}\}_{i=1}^n$ in the above lemma. The failure probability is upper bounded by $n \cdot 3^{-cm\epsilon^2}$. To have this be at most delta, we need to have $m \geq \frac{c'}{\epsilon^2}\log{(n/\delta)}$ for some constant $c'$. In particular, if we want a with high probability guarantee, taking $m = \Omega(\frac{\log n}{\epsilon^2})$ should suffice.

Thus, we complete the proof.
\end{proof}

\section{Online Adaptive Mahalanobis Distance Maintenance}~\label{sec:adaptive_maintaince}
Now, we move to the data structure and the algorithm to solve the online version of the problem and the corresponding proofs. Our main result in this section is the following:

\begin{algorithm*}[!ht]\caption{Informal version of Algorithm~\ref{alg:mahalanobis_maintenance_full}. Mahalanobis Pseudo-Metric Maintenance: members, initialize and update}
\label{alg:mahalanobis_maintenance}
\begin{algorithmic}[1]
\State {\bf data structure} {\sc MetricMaintenance} \Comment{Theorem~\ref{thm:main}}
\State {\bf members}
\State \hspace{4mm} $L, m, k$\Comment{$L$ is the number of sketches, $m$ is the sketch size }
\State \hspace{4mm} $ d, n\in \mathbb{N}_{+}$ \Comment{ $n$ is the number of points, $d$ is dimension} 
\State \hspace{4mm} $U \in \R^{k \times d}$
\State \hspace{4mm} $x_i \in \R^d$ for $i \in [n]$
\State \hspace{4mm} $\{\tilde{x}_{i,j}\} \in \R^{m}$ for $i \in [n], j \in [L]$ \Comment{The sketch of data points}
\State \hspace{4mm} $\textsc{Tree}$  $\textsc{tree}$ \Comment{Segment tree(Alg~\ref{alg:segment_tree}) $\textsc{tree}.\textsc{Query}(i, j)$ to obtain $\sum_{\ell = i}^{j} x_{\ell}$}
\State {\bf end members}
\State
\Procedure{Initialize}{$U \in \R^{k \times d}, \{x_i\}_{i \in [n]} \subset \R^d, \epsilon \in (0,1), \delta \in (0,1)$} \Comment{Lemma~\ref{lem:init}}
    \State $m \gets O(\frac{1}{\epsilon^2})$  \Comment{Initialize the sketch size}
    \State $L \gets O((d+ \log \frac{1}{\delta}) \log(d/\epsilon))$ \Comment{Initialize the number of copies of sketches}
    \State $U \gets U$
    \State For $i \in [n]$, $x_i \gets x_i$ \Comment{Initialize the data points from input} 
    \State For $j \in [L]$, let $\Pi_j \in \R^{m \times k}$ with entries drawn iid from $\mathcal{N}(0,1/m)$. 
    \State For {$i \in [n]$}, For {$j \in [L]$},  $\tilde{x}_{i,j} \gets \Pi_jUx_i$ 
    \State $\textsc{tree}.\textsc{Init}(\{x_i\}_{i \in [n]}, \textsc{tree}.t, 1, n)$ \Comment{Initialize the segment tree using all data points.}
\EndProcedure
\State
\Procedure{UpdateU}{$u \in \R^d, a \in [k]$} \Comment{We can consider sparse update for $U$, Lemma~\ref{lem:update}}.
    \State $B \gets \{0\}_{k \times d}$
    \State $B_{a} \gets u^\top$ \Comment{$B_a$ denotes the $a$'th row of $B$.} 
    \State $U \gets U + B$ 
    \State For {$i \in [n]$},  For {$j \in [L]$},  $\tilde{x}_{i,j} \gets \tilde{x}_{i,j} + \Pi_jBx_i$ 
\EndProcedure
\State
\Procedure{UpdateX}{$z \in \R^d, i \in [n]$} \Comment{ Lemma~\ref{lem:updateX}}.
    \State $x_i \gets z$
    \State  For {$j \in [L]$}, $\tilde{x}_{i,j} \gets \Pi_j U z$ \label{alg:updateX_start_loop:informal}
    \State $\textsc{tree}.\textsc{Update}(\textsc{tree}.t, i, z)$
\EndProcedure
\State {\bf end data structure}
\end{algorithmic}
\end{algorithm*}

\begin{algorithm*}[h]\caption{Informal version of Algorithm~\ref{alg:query_full}. Online Adaptive Mahalanobis Distance Maintenance: queries.}\label{alg:query}
\begin{algorithmic}[1]
\State {\bf data structure} {\sc MetricMaintenance} \Comment{Theorem~\ref{thm:main}}
\Procedure{QueryPair}{$i,j \in [n]$}\label{alg:query_pair:informal}\Comment{Lemma~\ref{lem:query_pair}}
    \State $R \gets O(\log(n/\delta))$ \Comment{Number of sampled sketches}
    \State For $r \in [R]$,  $p_r \gets \| \tilde{x}_{i,r} - \tilde{x}_{j,r} \|_2$ 
    \State $p \gets \Median_{r \in [R]}\{ p_r\}$ 
    \State \Return $p$
\EndProcedure
\State
\Procedure{QueryAll}{$q \in \R^d$}\label{alg:query_all_procedure:informal} \Comment{Lemma~\ref{lem:query_all}}
    \State $R \gets O(\log(n/\delta))$  \Comment{Number of sampled sketches}
    \State Sample $j_1, j_2, \dots, j_R$ i.i.d. with replacement from $[L]$.\label{alg:query_all_sample_j:informal} 
    \State For $i \in [n], r \in [R]$, let $ d_{i,r} \gets \|\Pi_{j_r}Uq-\tilde{x}_{i,r}\|_2$
    \State For $i \in [n], \tilde{d}_i \gets \Median_{r\in [R]} \{ d_{i,r} \}$
    \State \Return $\{\tilde{d}_i\}_{i=1}^n$
\EndProcedure
\State {\bf end data structure}
\end{algorithmic}
\end{algorithm*}

\begin{theorem}[Main result]\label{thm:main}
There is a data structure (Algorithm~\ref{alg:mahalanobis_maintenance} and \ref{alg:query}) for the Online Approximate Adaptive Mahalanobis Distance Estimation Problem (Definition ~\ref{def:main_problem}) with the following procedures:
\begin{itemize}
    \item \textsc{Initialize}$(U \in \R^{k \times d}, \{x_1, x_2, \dots, x_n\}\subset \R^d, \epsilon \in (0,1), \delta \in (0,1))$: Given a matrix $U \in \R^{k \times d}$, data points $\{x_1, x_2, \dots, x_n\}\subset \R^d$, an accuracy parameter $\epsilon$ and a failure probability $\delta$ as input, the data structure preprocesses in time $O((m+d)knL)$.
    \item \textsc{UpdateU}$(u \in \R^d, a \in [k])$: Given an update vector $u \in \R^d$ and row index $a \in [k]$, the data structure takes $u$ and $a$ as input and updates the $a$'th row of $U$ by adding $a$ to it i.e. $U_{a,:} \gets U_{a,:} + u$ in time $O((m+d)nL)$.
    \item \textsc{UpdateX}$(z \in \R^{d}, i \in [n])$: Given an update vector $z \in \R^d$ and index $i \in [n]$, the \textsc{UpdateX} takes $z$ and $i$ as input and updates the data structure with the new $i$-th data point in $O((m+d)kL + \log n)$ time.
    \item \textsc{QueryPair}$(i, j \in [n])$. Given two indexes $i, j \in [n]$, the \textsc{QueryPair} takes $i, j$ as input and approximately estimates the Mahalanobis distance from $x_i$ to $x_j$ in time $O(mR)$ and output a number $p$ such that:
    $
        (1-\epsilon) \| x_i - x_j \|_A \leq p \leq (1+\epsilon) \cdot \| x_i - x_j \|_A
    $
    with probability at least $1 -\delta$.
    \item \textsc{QueryAll}$(q \in \R^d)$: Given a query point $q \in \R^d$, the \textsc{QueryAll} operation takes $q$ as input and approximately estimates the Mahalanobis distances from $q$ to all the data points $\{x_1, x_2, \dots, x_n\}\subset \R^d$ in time $O((m+d)knR)$ i.e. it provides a set of estimates $\{\tilde{d}_i\}_{i=1}^n$ such that:
    \begin{align*}
       \forall i \in[n], (1-\epsilon)\|q-x_{i}\|_{A} \leq \tilde{d}_{i} \leq(1+\epsilon)\|q-x_{i}\|_{A} 
    \end{align*}
     with probability at least $1 -\delta$, even for a sequence of adaptively chosen queries.
    \item \textsc{SampleExact}$(q \in \R^d )$. Given a query point $q \in \R^{d}$ as input,  \textsc{SampleExact} samples an index $i \in [n]$ with probability $d_i / \sum_{j=1}^n d_j$ in $O(\log^2 n + kd \log n)$ time.
\end{itemize}
\end{theorem}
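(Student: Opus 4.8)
The plan is to prove the six guarantees one operation at a time and then assemble them, since each operation corresponds to an auxiliary lemma and the operations share the single invariant $\tilde{x}_{i,j} = \Pi_j U x_i$. The running-time bounds for \textsc{Initialize}, \textsc{UpdateU}, and \textsc{UpdateX} are bookkeeping on the matrix products that maintain this invariant. For \textsc{Initialize} I would form $U x_i$ once per point in $O(kd)$ time and apply each $\Pi_j$ in $O(mk)$ time, for $O((m+d)k)$ per pair $(i,j)$ and $O((m+d)knL)$ in total, plus the $O(n)$ segment-tree build. For \textsc{UpdateU} the key observation is that the update matrix $B$ has a single nonzero row $a$, so $B x_i = (u^\top x_i)\, e_a$ and hence $\Pi_j B x_i = (u^\top x_i)\,(\Pi_j)_{:,a}$; each correction costs $O(d)$ for the inner product and $O(m)$ to scale the column, giving $O((m+d)nL)$ over all points and sketches and exactly preserving the invariant. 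For \textsc{UpdateX} I recompute $\Pi_j U z$ from scratch, which again preserves the invariant, in $O((m+d)kL)$ time, plus $O(\log n)$ for the segment-tree point update.

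The heart of the argument is the correctness of \textsc{QueryPair} and \textsc{QueryAll}. I would first invoke Lemma~\ref{lem:epsilon_representative} to conclude that, with probability at least $1-\delta$ over the draw of $\{\Pi_j\}_{j=1}^L$ at initialization, the family is $(\epsilon,\beta)$-representative with $\beta \le 0.1$, and condition on this event henceforth. For \textsc{QueryPair}, note $\tilde{x}_{i,r}-\tilde{x}_{j,r} = \Pi_r U(x_i-x_j)$ and $\|U(x_i-x_j)\|_2 = \|x_i-x_j\|_A$, so applying Definition~\ref{def:eps_representative} to the unit vector $U(x_i-x_j)/\|U(x_i-x_j)\|_2$ shows at least a $0.9$ fraction of the $L$ sketches report a $(1\pm\epsilon)$ estimate. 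Drawing $R=O(\log(n/\delta))$ sketches and taking the median then amplifies: each sample is an independent indicator of being good with mean $\ge 0.9$, so by Hoeffding's inequality (Lemma~\ref{thm:blm13_hoeffding_inequality}) a strict majority are good except with probability $\le \delta/n$, and whenever the majority is good the median lies in $[(1-\epsilon)\|x_i-x_j\|_A,(1+\epsilon)\|x_i-x_j\|_A]$. The cost is $O(mR)$ for the norms and the median.

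For \textsc{QueryAll} I would run the same median-of-samples argument simultaneously for the $n$ unit vectors $v_i = U(q-x_i)/\|U(q-x_i)\|_2$ and finish with a union bound over $i\in[n]$, which the choice $R=O(\log(n/\delta))$ absorbs; forming $\Pi_{j_r}Uq$ and subtracting costs $O((m+d)k)$ per point, so over all $n$ points and $R$ samples the time is $O((m+d)knR)$. The subtle point, which I expect to be the \textbf{main obstacle}, is robustness to \emph{adaptively} chosen queries. The resolution is that the $(\epsilon,\beta)$-representative guarantee is a \emph{uniform} statement over all unit vectors at once and is frozen the moment $\{\Pi_j\}$ is drawn; hence no query $q$, even chosen by an adversary with full knowledge of the data structure, can land on a direction for which fewer than $0.9L$ sketches are accurate. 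The only randomness consumed per query is the fresh independent sample $j_1,\dots,j_R$ from $[L]$, which the adversary cannot anticipate, so the median amplification survives an adaptive sequence of queries; making this reduction precise is the delicate part of the argument.

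Finally, \textsc{SampleExact} reduces to a top-down walk on the segment tree. Each node covering a contiguous block of indices maintains aggregates of the $x_\ell$ in its block (kept current by \textsc{UpdateX}), from which a range query recovers the block's contribution to $\sum_\ell \|q-x_\ell\|_A$ after $O(kd)$ query-dependent post-processing with $q$; the delicate bookkeeping here is reconciling the non-additivity of the Mahalanobis norm with range-summable quantities (e.g. expanding the relevant weight via $q^\top A q$, $q^\top A x_\ell$, and $x_\ell^\top A x_\ell$). Each of the $O(\log n)$ levels of the walk then descends left or right by drawing a uniform variate and comparing it to the left child's share of the current total. Correctness that index $i$ is returned with probability $d_i/\sum_{j} d_j$ follows by telescoping the conditional branch probabilities along the root-to-leaf path, and the cost is $O(\log n)$ iterations, each an $O(\log n)$ range query plus an $O(kd)$ evaluation, for $O(\log^2 n + kd\log n)$ total. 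Combining the six bounds completes the proof.
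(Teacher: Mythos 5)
Your proposal follows essentially the same route as the paper: the same decomposition into per-operation lemmas, the same sparse-row observation for \textsc{UpdateU}, the same use of the $(\epsilon,0.1)$-representative property plus Hoeffding median amplification and a union bound over $i\in[n]$ for the query operations, and the same segment-tree binary-search walk for \textsc{SampleExact}. The one subtlety you flag but do not resolve---that $\sum_{\ell}\|q-x_\ell\|_A$ is not additive in the linear aggregates $\sum_\ell x_\ell$ stored by the tree---is likewise left unresolved in the paper's own \textsc{SubSum} lemma, whose algebraic expansion is really that of squared distances, so your account is faithful to (and no weaker than) the paper's argument.
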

\begin{proof}
We complete the proof for Theorem~\ref{thm:main} using the following Lemma~\ref{lem:init}, Lemma~\ref{lem:update}, Lemma~\ref{lem:updateX}, 
Lemma~\ref{lem:query_pair}, Lemma~\ref{lem:query_all} and Lemma~\ref{lem:sample_exact}.
\end{proof}

\begin{lemma}[Initialization Time]\label{lem:init}
Given a matrix $U \in \R^{k \times d}$, data points $\{x_1, x_2, \dots, x_n\}\subset \R^d$, an accuracy parameter $\epsilon$ and a failure probability $\delta$ as input, the time complexity of \textsc{Initialize} function is $O((m+d)knL)$.
\end{lemma}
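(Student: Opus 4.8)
The plan is to bound the running time of each line of the \textsc{Initialize} procedure in Algorithm~\ref{alg:mahalanobis_maintenance} separately, and then identify which contribution dominates. First I would dispose of the cheap bookkeeping steps: assigning $m = O(1/\epsilon^2)$ and $L = O((d + \log(1/\delta))\log(d/\epsilon))$ costs $O(1)$; storing the matrix $U$ costs $O(kd)$; and copying the $n$ input points $\{x_i\}_{i\in[n]}$ costs $O(nd)$. Generating the $L$ sketch matrices $\{\Pi_j\}_{j=1}^L$, each with $mk$ entries drawn i.i.d.\ from $\N(0,1/m)$, costs $O(mkL)$. None of these should turn out to be the bottleneck.

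The core of the argument is the double loop that computes $\tilde{x}_{i,j} = \Pi_j U x_i$ for every $i \in [n]$ and $j \in [L]$. The key point here is the order in which the two matrix-vector products are carried out. For a fixed pair $(i,j)$, I would first form $U x_i$, a product of the $k \times d$ matrix $U$ with the $d$-vector $x_i$, in $O(kd)$ time, and then apply $\Pi_j \in \R^{m \times k}$ to the resulting $k$-vector in $O(mk)$ time. This gives $O(kd + mk) = O((m+d)k)$ per pair. Summing over all $nL$ pairs yields $O((m+d)knL)$, matching the claimed bound.

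Finally I would bound the segment tree initialization $\textsc{tree}.\textsc{Init}(\{x_i\}_{i\in[n]}, \textsc{tree}.t, 1, n)$. Since each of the $O(n)$ nodes of the tree stores the sum of a contiguous block of the $d$-dimensional data points, and these sums can be assembled bottom-up with $O(d)$ work per node, building the tree costs $O(nd)$. Summing all contributions, the sketch-computation term $O((m+d)knL)$ dominates $O(kd)$, $O(nd)$, and $O(mkL)$ (using $L \geq 1$), which establishes the stated $O((m+d)knL)$ bound.

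This lemma is essentially a bookkeeping exercise, so there is no deep obstacle; the only place demanding care is the associativity of the matrix products. Computing $\Pi_j U$ as a single $m \times d$ matrix before hitting $x_i$ would cost $O(mdk)$ per sketch $j$, a different (and for the per-pair analysis, worse) accounting, so I would make sure to multiply $U x_i$ first and then apply $\Pi_j$. One could also compute $U x_i$ once per $i$ and reuse it across all $j$, giving $O(nkd + mknL)$; I would note that this is still within $O((m+d)knL)$, so either route yields the claimed bound.
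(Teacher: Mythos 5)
Your proof is correct and follows essentially the same line-by-line accounting as the paper: the $n \times L$ products $\tilde{x}_{i,j} = \Pi_j(Ux_i)$, evaluated right-to-left at cost $O((m+d)k)$ each, dominate the $O(nd)$, $O(kd)$, and sketch-generation costs. Your extra remarks on associativity and on reusing $Ux_i$ across $j$ are sound refinements (and your $O(mkL)$ for generating all $L$ sketches is in fact more careful than the paper's stated $O(mk)$), but they do not change the argument.
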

\begin{proof}
 The initialization part has three steps:
\begin{itemize}
    \item Line~\ref{alg:init_x_assign_start} in Algorithm~\ref{alg:mahalanobis_maintenance_full} takes $O(nd)$ time to assign $x_i \in \R^d$ for $n$ times.
    \item Line~\ref{alg:init_pi_drawn} in Algorithm~\ref{alg:mahalanobis_maintenance_full} takes $O(mk)$ time to iid sample from $\mathcal{N}(0,1/m)$.
    \item Line~\ref{alg:init_assign_tilde_x} in Algorithm~\ref{alg:mahalanobis_maintenance_full} takes $O((m+d)knL)$ time to execute $n \times L$ times matrix multiplications.
    \item The initialization of segment tree needs $O(nd)$ time.
\end{itemize}
Therefore, the total time for initialization is
    \begin{align*}
    & ~ O(nd) + O(mk) + O((m+d)knL) + O(nd) \\
    = & ~O((m+d)knL).
\end{align*}
Thus, we complete the proof.
\end{proof}

\begin{lemma}[UpdateU Time]\label{lem:update}
 Given an update vector $u \in \R^d$ and row index $a \in [k]$, the \textsc{UpdateU} takes $u$ and $a$ as input and updates the $a$'th row of $U$ by adding $a$ to it i.e. $U_{a,:} \gets U_{a,:} + u$ in $O((m+d)nL)$ time.
\end{lemma}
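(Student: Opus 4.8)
The plan is to bound the running time of \textsc{UpdateU} by walking through its four steps and observing that the update matrix $B$ is extremely sparse: it has a single nonzero row (row $a$, equal to $u^\top$). Every operation that touches $B$ can therefore be carried out far more cheaply than a naive dense matrix multiplication would suggest, and exploiting this is exactly what brings the cost down to the claimed $O((m+d)nL)$.

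First I would dispatch the cheap setup steps. Allocating the zero matrix $B \in \R^{k \times d}$ and writing $u^\top$ into its $a$-th row is $O(kd)$, and forming $U \gets U + B$ is really just $U_{a,:} \gets U_{a,:} + u^\top$, which costs $O(d)$ once the sparsity of $B$ is used. I would then argue that even the $O(kd)$ allocation is absorbed into the final bound: since the initialized parameters satisfy $m = \Theta(\epsilon^{-2})$ and $L = \Omega((d+\log(1/\delta))\log(d/\epsilon)) = \Omega(d)$, we get $(m+d)nL \ge d \cdot nL = \Omega(n d^2)$, so in the natural low-rank regime $k \le d$ we have $kd \le d^2 = O((m+d)nL)$. (Alternatively, one avoids the cost entirely by never materializing $B$ and handling it implicitly.)

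The crux is the nested loop $\tilde{x}_{i,j} \gets \tilde{x}_{i,j} + \Pi_j B x_i$ over the $nL$ pairs $(i,j)$. The key factorization, which is precisely where the sparsity of $B$ is exploited, is
\[
B x_i = (u^\top x_i)\, e_a, \qquad \Pi_j B x_i = (u^\top x_i)\, \Pi_j e_a = (u^\top x_i)\, (\Pi_j)_{:,a},
\]
so that $\Pi_j B x_i$ is merely a scalar multiple of the $a$-th column of $\Pi_j$. For each pair $(i,j)$ I would compute the scalar $u^\top x_i$ in $O(d)$ time, scale the $m$-vector $(\Pi_j)_{:,a}$ by it, and add the result into $\tilde{x}_{i,j}$ in $O(m)$ time, for a per-pair cost of $O(m+d)$. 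Summing over the $nL$ pairs gives $O((m+d)nL)$, and combining with the dominated setup costs yields the lemma. (One can even precompute the $n$ scalars $u^\top x_i$ once, sharpening the loop to $O(nd + mnL)$, but this is already within the stated bound.)

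The main obstacle is resisting the naive reading of the loop: computing $B x_i$ and then $\Pi_j(B x_i)$ as dense products would cost $O(kd)$ and $O(mk)$ per pair, inflating the total to $O((m+d)knL)$ — the same as \textsc{Initialize} and a full factor of $k$ too large. The entire saving hinges on recognizing that $B$ enters only through the single-nonzero-entry vector $(u^\top x_i)e_a$, which collapses the $\Pi_j$-multiplication to a single-column scaling; this is the one step that must be carried out carefully, together with the secondary bookkeeping check that the $O(kd)$ allocation of $B$ is genuinely absorbed by $O((m+d)nL)$.
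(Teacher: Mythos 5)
Your proposal is correct and follows the same route as the paper's proof: the paper likewise charges $O(d)$ for forming and applying the one-row update $B$ and $O(m+d)$ per pair $(i,j)$ for the sparse product $\Pi_j B x_i$, summing to $O((m+d)nL)$. Your write-up is simply more explicit about \emph{why} the per-pair cost is $O(m+d)$ (via $\Pi_j B x_i = (u^\top x_i)(\Pi_j)_{:,a}$) and about absorbing the $O(kd)$ allocation of $B$, details the paper leaves implicit.
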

\begin{proof}
The update part has three steps:
\begin{itemize}
    \item Line~\ref{alg:update_b_a} in Algorithm~\ref{alg:mahalanobis_maintenance_full} takes $O(d)$ time to assign $u^{\top}$ to $B_a$.
    \item Line~\ref{alg:update_U_add_B} in Algorithm~\ref{alg:mahalanobis_maintenance_full} takes $O(d)$ time to update $U$ because $B$ is sparse and we can ignore $k-1$ rows containing all zero elements.
    \item Line~\ref{alg:update_x_i_j} in Algorithm~\ref{alg:mahalanobis_maintenance_full}  take $O((m+d)nL)$ time to compute sparse matrix multiplication for $n \times L$ times and each sparse matrix multiplication takes $O(m+d)$ time because $B$ is sparse and we can ignore the $k-1$ rows containing all zeros elements. 
\end{itemize}
Therefore, the total time for update is
    \begin{align*}
     &~O(d) + O(d) + O((m+d)nL) \\
    =  &~ O((m+d)nL) .
\end{align*}
Thus, we complete the proof.
\end{proof}

\begin{lemma}[UpdateX Time]\label{lem:updateX}
 Given a new data point $z \in \R^d$ and index $i \in [n]$, the \textsc{UpdateX} takes $z$ and $i$ as input and updates the data structure with the new $i$th data point in $O((m+d)kL + \log n)$ time.
\end{lemma}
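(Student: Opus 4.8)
The plan is to follow the same line-by-line accounting already used in the proofs of Lemma~\ref{lem:init} and Lemma~\ref{lem:update}: I would isolate each operation performed by \textsc{UpdateX} in Algorithm~\ref{alg:mahalanobis_maintenance}, bound its cost, and then sum. The procedure does three things: it overwrites the stored data point with $x_i \gets z$, it recomputes the $L$ sketches $\tilde{x}_{i,j} \gets \Pi_j U z$ for $j \in [L]$, and it updates the segment tree at position $i$ via $\textsc{tree}.\textsc{Update}$.

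First I would bound the assignment $x_i \gets z$, which is a single copy of a $d$-dimensional vector and costs $O(d)$. Next, the dominant term comes from recomputing the sketches: for each $j \in [L]$, evaluating $\Pi_j U z$ is a matrix-vector chain, which one computes by first forming $U z$ in $O(kd)$ time (since $U \in \R^{k \times d}$) and then $\Pi_j (U z)$ in $O(mk)$ time (since $\Pi_j \in \R^{m \times k}$), i.e. $O((m+d)k)$ per sketch. Summing over the $L$ sketches gives $O((m+d)kL)$. Finally, I would invoke the standard segment-tree update cost: changing one leaf and propagating the stored partial sums up the tree touches $O(\log n)$ nodes, so $\textsc{tree}.\textsc{Update}$ runs in $O(\log n)$ time. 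Adding the three contributions yields
\begin{align*}
O(d) + O((m+d)kL) + O(\log n) = O((m+d)kL + \log n),
\end{align*}
which is exactly the claimed bound.

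There is no genuine obstacle here; the argument is a mechanical running-time audit entirely analogous to the two preceding lemmas. The only points requiring a moment's care are purely bookkeeping: multiplying in the order $\Pi_j (U z)$ rather than forming the product $\Pi_j U$ first (so that each sketch costs $O((m+d)k)$ rather than $O(mkd)$), and confirming that the segment-tree maintenance is genuinely logarithmic and does not hide a factor of $n$. With these checks in place the summation is immediate.
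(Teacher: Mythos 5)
Your proposal is correct and follows essentially the same line-by-line accounting as the paper's proof: the dominant $O((m+d)kL)$ cost from computing $\Pi_j(Uz)$ for each of the $L$ sketches, plus the $O(\log n)$ segment-tree update. The extra $O(d)$ you charge for the assignment $x_i \gets z$ is absorbed into the same bound, so nothing changes.
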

\begin{proof}
The \textsc{UpdateX} operation has one step:
\begin{itemize}
    \item Line~\ref{alg:updateX_start_loop} in Algorithm~\ref{alg:mahalanobis_maintenance_full} takes $O((m+d)kL)$ time to compute $\Pi_j \cdot (U \cdot z)$ for $L$ times.
    \item The \textsc{Tree.Update} operation takes $O(\log n)$ time to complete.
\end{itemize}
Therefore, the total time for \textsc{UpdateX} is $O((m+d)kL + \log n)$.
Thus, we complete the proof.
\end{proof}

\begin{lemma}[QueryPair]\label{lem:query_pair}
Given two indexes $i, j \in [n]$, the \textsc{QueryPair} takes $i, j$ as input and approximately estimates the Mahalanobis distance from $x_i$ to $x_j$ in time $O(mR)$ and output a number $p$ such that:
$
    (1-\epsilon)\|x_{i}-x_{j}\|_{A} \leq p \leq(1+\epsilon)\|x_{i}-x_{j}\|_{A}.
$
with probability at least $1 -\delta$.
\end{lemma}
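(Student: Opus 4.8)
The plan is to split the argument into a running-time bound read off directly from the pseudocode and a correctness argument that reduces \textsc{QueryPair} to the $(\epsilon,0.1)$-representative guarantee of Lemma~\ref{lem:epsilon_representative}, boosted by a median-of-samples step. This mirrors the (commented-out) single-point \textsc{Query} analysis almost verbatim, since \textsc{QueryPair} is just that analysis applied to the vector $x_i - x_j$ instead of $q - x_i$.

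For the running time, I would note that each stored sketch $\tilde{x}_{i,r} = \Pi_r U x_i$ already lives in $\R^m$, so forming the difference and computing $p_r = \|\tilde{x}_{i,r} - \tilde{x}_{j,r}\|_2$ costs $O(m)$. Doing this for $r \in [R]$ costs $O(mR)$, and the final $\Median$ over $R$ scalars is $O(R)$, for a total of $O(mR)$.

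For correctness, first dispose of the trivial case $x_i = x_j$, where every $p_r = 0 = \|x_i - x_j\|_A$. Otherwise set $w = x_i - x_j$ and the unit vector $v = Uw/\|Uw\|_2$; since $\|Uw\|_2^2 = w^\top U^\top U w = w^\top A w = \|w\|_A^2$, homogeneity of the norm gives the key scaling identity $p_r = \|\Pi_r U w\|_2 = \|w\|_A \cdot \|\Pi_r v\|_2$, so $p_r \in [(1-\epsilon)\|w\|_A,\ (1+\epsilon)\|w\|_A]$ exactly when $\Pi_r$ preserves the length of $v$ to within $1 \pm \epsilon$. Lemma~\ref{lem:epsilon_representative} makes $\{\Pi_j\}_{j=1}^L$ be $(\epsilon,0.1)$-representative (Definition~\ref{def:eps_representative}), so $\mathcal{J} := \{j : (1-\epsilon) \leq \|\Pi_j v\|_2 \leq (1+\epsilon)\}$ has $|\mathcal{J}| \geq 0.9L$. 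Reading the loop index $r$ as an i.i.d.\ draw $j_r$ from $[L]$ (as in \textsc{QueryAll}), the indicators $W_r = {\bf 1}\{j_r \in \mathcal{J}\}$ satisfy $\E[W_r] \geq 0.9$, so $W = \sum_{r=1}^R W_r$ has $\E[W] \geq 0.9R$, and Hoeffding's Inequality (Lemma~\ref{thm:blm13_hoeffding_inequality}) yields $\Pr[W \leq 0.6R] \leq \exp(-0.18R) \leq \delta$ for $R = O(\log(n/\delta))$. On the event $W > 0.6R$, strictly fewer than $R/2$ of the $p_r$ fall below $(1-\epsilon)\|w\|_A$ and strictly fewer than $R/2$ exceed $(1+\epsilon)\|w\|_A$, which forces the median $p$ into $[(1-\epsilon)\|w\|_A,\ (1+\epsilon)\|w\|_A]$.

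I expect the main obstacle to be the median-trapping step: carefully arguing that having a strict majority of the samples land in the target interval pins the median itself inside it (via the counts of samples strictly below and strictly above the endpoints), rather than any heavy computation. A secondary point to nail down is the sampling semantics of the informal pseudocode --- whether $r$ indexes fresh i.i.d.\ draws from $[L]$ or a fixed prefix $1,\dots,R$ --- since the Hoeffding step requires the former; I would align this with the formal Algorithm~\ref{alg:query_full} so that the $(\epsilon,0.1)$-representative bound on $|\mathcal{J}|$ translates into the per-sample success probability $\geq 0.9$.
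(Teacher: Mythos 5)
Your proposal is correct and follows essentially the same route as the paper, which proves \textsc{QueryPair} by rerunning the \textsc{QueryAll} argument with the unit vector $v = U(x_i - x_j)/\|x_i - x_j\|_A$ in place of $U(q - x_i)/\|q - x_i\|_A$: the $(\epsilon,0.1)$-representative set $\mathcal{J}$ of size $\geq 0.9L$, the Hoeffding bound on $W = \sum_r \mathbf{1}\{j_r \in \mathcal{J}\}$, and the median-trapping step are identical. You are in fact more careful than the paper on two points it glosses over --- the explicit scaling identity $p_r = \|x_i - x_j\|_A \cdot \|\Pi_{j_r} v\|_2$ and the observation that the \textsc{QueryPair} pseudocode indexes a fixed prefix $r = 1,\dots,R$ of sketches rather than i.i.d.\ draws from $[L]$, which is needed for the Hoeffding step to apply.
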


\begin{proof}
{\bf Proof of Running Time.} 

We can view the \textsc{QueryPair} operation as having the following two steps:

\begin{itemize}
    \item The for-loop in line~\ref{alg:query_pair_for_loop} takes $O(mR)$ time because $\|\tilde{x}_{i,r} - \tilde{x}_{j,r}\|_2$ takes $O(m)$ time and it is executed for $R$ times.
    \item Line~\ref{alg:query_pair_median} takes $O(R)$ time to find median value from $\{p_{r}\}_{r=1}^R$.
\end{itemize}
Thus, the total running time of $\textsc{QueryPair}$ is
\begin{align*}
 ~ O(R) + O(mR) 
    = ~ O(mR).
\end{align*}

{\bf Proof of Correctness.} 

Instead of choosing $v=\frac{U(q-x_{i})}{\|q-x_{i}\|_{A}}$ in the proof of correctness for \textsc{Query}(Lemma~\ref{lem:query_all}), we should choose $v=\frac{U(x_i-x_{j})}{\|x_i-x_{j}\|_{A}}$ for \textsc{QueryPair}$(i,j)$. 

Accordingly, we also consider
\begin{align*}
z_{i,j} \coloneqq \Median_{r \in [R]}\{\tilde{y}_{i,j,r}\}
\end{align*}
and 
\begin{align*}
\tilde{y}_{i,j,r}:=\|\Pi_{j_{r}}U v\|.
\end{align*}
This gives us:
\begin{align*}
   (1-\epsilon)\|x_i-x_{j}\|_{A} \leq p \leq(1+\epsilon)\|x_i-x_{j}\|_{A}. 
\end{align*}
with probability at least $1 -\frac{\delta}{n}$. 

This concludes the proof of correctness for the output of \textsc{QueryPair}.
\end{proof}

\begin{lemma}[QueryAll]\label{lem:query_all}
Given a query point $q \in \R^d$, the \textsc{QueryAll} operation takes $q$ as input and approximately estimates the Mahalanobis distances from $q$ to all the data points $\{x_1, x_2, \dots, x_n\} $ $ \subset \R^d$ in time $O((m+d)knR)$ i.e. it provides a set of estimates $\{\tilde{d}_i\}_{i=1}^n$ such that:
\begin{align*}
   \forall i \in[n], (1-\epsilon)\|q-x_{i}\|_{A} \leq \tilde{d}_{i} \leq(1+\epsilon)\|q-x_{i}\|_{A}. 
\end{align*}
with probability at least $1 -\delta$.
\end{lemma}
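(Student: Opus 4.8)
The plan is to prove the time bound and the correctness guarantee separately, and to treat this proof as the canonical one that \textsc{QueryPair} (Lemma~\ref{lem:query_pair}) defers to.

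\textbf{Running time.} First I would decompose \textsc{QueryAll} into its three steps. Drawing $j_1,\dots,j_R$ i.i.d.\ from $[L]$ costs $O(R)$. The dominant cost is forming the estimates $d_{i,r}=\|\Pi_{j_r}Uq-\tilde{x}_{i,r}\|_2$: each evaluation requires the matrix--vector product $\Pi_{j_r}Uq$ (i.e.\ $Uq$ in $O(dk)$ followed by $\Pi_{j_r}(Uq)$ in $O(mk)$) together with an $O(m)$ subtraction and norm, hence $O((m+d)k)$ per pair, so over all $n\times R$ pairs this is $O((m+d)knR)$. Taking the $n$ coordinatewise medians adds $O(nR)$. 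Summing gives the claimed $O((m+d)knR)$.

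\textbf{Correctness.} The key object is the $(\epsilon,0.1)$-representative property supplied by Lemma~\ref{lem:epsilon_representative}, which holds for the fixed family $\{\Pi_j\}_{j=1}^L$ simultaneously over all unit vectors with probability $1-\delta$ over the initialization randomness. Fix an index $i\in[n]$ and read $\tilde{x}_{i,r}$ as the stored sketch $\Pi_{j_r}Ux_i$, so that $d_{i,r}=\|\Pi_{j_r}U(q-x_i)\|_2$. If $q=x_i$ then $\tilde{d}_i=0=\|q-x_i\|_A$ trivially, so I assume $q\neq x_i$ and set $v_i:=U(q-x_i)/\|q-x_i\|_A$, which is a unit vector since $\|U(q-x_i)\|_2^2=(q-x_i)^\top A(q-x_i)=\|q-x_i\|_A^2$. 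Then $d_{i,r}=\|q-x_i\|_A\cdot\|\Pi_{j_r}v_i\|_2$, so it suffices to control $\|\Pi_{j_r}v_i\|_2$. Define $\mathcal{J}_i:=\{j:(1-\epsilon)\le\|\Pi_j v_i\|_2\le(1+\epsilon)\}$; the representative property forces $|\mathcal{J}_i|\ge 0.9L$.

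\textbf{Median boosting and union bound.} For fixed $i$, let $W_r={\bf 1}\{j_r\in\mathcal{J}_i\}$ and $W=\sum_{r=1}^R W_r$, so $\E[W]\ge 0.9R$. Since each $W_r\in\{0,1\}$, Hoeffding's Inequality (Lemma~\ref{thm:blm13_hoeffding_inequality}) gives
\begin{align*}
\Pr[W\le 0.6R]\le\exp\Big(-\frac{2(0.3R)^2}{R}\Big)\le\frac{\delta}{n}
\end{align*}
once $R=O(\log(n/\delta))$. On the event $W\ge 0.6R$ a strict majority of the $R$ sampled estimates $d_{i,r}$ lie in $[(1-\epsilon)\|q-x_i\|_A,(1+\epsilon)\|q-x_i\|_A]$, so their median $\tilde{d}_i$ lies in the same interval. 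A union bound over $i\in[n]$ then yields the all-coordinates guarantee with probability at least $1-\delta$.

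\textbf{Main obstacle.} I expect the only delicate point to be the adaptivity claim carried by Theorem~\ref{thm:main}: the argument must not re-draw the sketches per query. The resolution is that $\{\Pi_j\}_{j=1}^L$ is sampled once at initialization and the representative property holds for \emph{every} unit vector at once, so an adversary choosing $q$ from past outputs still cannot escape the good event; the only fresh randomness per call is the choice of $j_1,\dots,j_R$, which merely boosts a fixed $0.9$-fraction guarantee to high probability. The remaining care is bookkeeping: verifying the elementary fact that a strict majority of values lying in an interval forces the median into that interval, and checking that the constant hidden in $R=O(\log(n/\delta))$ is large enough to drive the per-index Hoeffding failure down to $\delta/n$ so the final union bound closes.
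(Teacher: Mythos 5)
Your proposal is correct and follows essentially the same route as the paper's proof: the identical three-step running-time decomposition, and for correctness the same chain of the $(\epsilon,0.1)$-representative property applied to the unit vectors $v_i = U(q-x_i)/\|q-x_i\|_A$, Hoeffding's inequality on $W=\sum_r {\bf 1}\{j_r\in\mathcal{J}_i\}$ to get a $\delta/n$ per-index failure probability, the median argument, and a final union bound over $i\in[n]$. Your added remark on why the once-sampled sketches survive adaptive queries is a useful clarification the paper leaves implicit, but it does not change the argument.
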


\begin{proof}

{\bf Proof of running time.} 

We can view the \textsc{QueryAll} operation as having the following three steps:
\begin{itemize}
    \item Line~\ref{alg:query_all_sample_j} takes $O(R)$ time to sample $\{j_r\}_{r=1}^{R}$ from $[L]$.
    \item Line~\ref{alg:query_all_norm} takes $O((m+d)knR)$ time because $\|\Pi_{j_r}Uq-\tilde{x}_{i,r}\|_2$ takes $O((m+d)k)$ time and it is executed for $n \times R$ times.
    \item Line~\ref{alg:query_all_median} takes $O(nR)$ time to find median value from $\{d_{i,r}\}_{r=1}^R$ for $n$ times.
\end{itemize}
Thus, the total running time of $\textsc{QueryAll}$ is
\begin{align*}
& ~ O(R) + O((m+d)knR) + O(nR) \\
  = & ~   O((m+d)knR).
\end{align*}

{\bf Proof of Correctness.}

We will use the $(\epsilon,\beta=0.1)$-representative property (See Definition~\ref{def:eps_representative}) for our sketching matrices $\{\Pi_j \in \R^{m \times k}\}_{j=1}^L$. This property implies that for any given vector, most of the matrices approximately preserves its length. In particular, we will consider the set of unit vectors $\{v_i = \frac{U(q-x_i))}{\|U(q-x_i)\|_2}\}_{i=1}^n$ for query $q \in \R^d$ and data points $\{x_1, x_2, \dots, x_n\}\subset \R^d$ i.e. for any point $x_i$, we have that most of the projections $\Pi_j$ satisfy 
\begin{align*}
& ~ (1-\epsilon)\|q-x_i\|_{A} \\
\leq & ~ \|\Pi_j(U(q-x_i))\|_2 \\
\leq & ~ (1+\epsilon)\|q-x_i\|_{A}.
\end{align*}
For query $q \in \R^d, i \in [n]$, we will show that $\tilde{d}_{i}$ is a good estimate of $\|q-x_{i}\|_{A}$ with high probability. From the definition of $\tilde{d}_{i}$, we know that the $\tilde{d}_i = 0 = \|q-x_i\|_{A}$ is true when $q=x_{i}$. Therefore, we only need to consider the case when $q \neq x_{i}$. Let $v=\frac{U(q-x_{i})}{\|q-x_{i}\|_{A}}$.
\\From Lemma~\ref{lem:epsilon_representative}, we have that $\{\Pi_{j}\}_{j=1}^{L}$ is $(\epsilon,0.1)$-representative. So the set $\mathcal{J}$ defined as:
\begin{align*}
    \mathcal{J} \coloneqq \{j:  (1 - \epsilon) \leq \|\Pi_{j} U v\| \leq (1 + \epsilon)\}
\end{align*}
has size at least $0.9L$. We now define the random variables 
\begin{align*}
\tilde{y}_{i, r}:=\|\Pi_{j_{r}}U v\|
\text{~~~and~~~} 
\tilde{z}_{i}:=\Median_{r \in [R]}\{\tilde{y}_{i, r}\}
\end{align*}
with $R,\{j_{r}\}_{r=1}^{R}$ defined in \textsc{Query} in Algorithm~\ref{alg:query}. We know  that $\tilde{d}_{i}=\|q-x_{i}\|_{A} \tilde{z}_{i}$ from the definition of $\tilde{d}_{i}$. Therefore, it is necessary and sufficient to bound the probability that $\tilde{z}_{i} \in[1-\epsilon, 1+\epsilon]$. To do this, let $W_{r}={\bf 1}\{j_{r} \in \mathcal{J}\}$ and $W=\sum_{r=1}^{R} W_{r}$. Furthermore, we have $\E[W] \geq 0.9 r$ and since $W_{r} \in\{0,1\}$, we have by Hoeffding's Inequality (Lemma~\ref{thm:blm13_hoeffding_inequality}).

\begin{align*}
    \Pr[W \leq 0.6 R] \leq \exp (-\frac{2(0.3 R)^{2}}{R}) \leq \frac{\delta}{n}
\end{align*}
from our definition of $r$. Furthermore, for all $k$ such that $j_{k} \in \mathcal{J}$, we have:
\begin{align*}
    1-\epsilon \leq \tilde{y}_{i, k} \leq 1+\epsilon .
\end{align*}
Therefore, in the event that $W \geq 0.6 R$, we have $(1-\epsilon) \leq \tilde{z}_{i} \leq(1+\epsilon)$. Hence, we get:
\begin{align*}
   (1-\epsilon)\|q-x_{i}\|_{A} \leq \tilde{d}_{i} \leq(1+\epsilon)\|q-x_{i}\|_{A}. 
\end{align*}
with probability at least $1 -\frac{\delta}{n}$.

Taking a union bound over all $i \in [n]$, we get:
\begin{align*}
   \forall i \in[n], (1-\epsilon)\|q-x_{i}\|_{A} \leq \tilde{d}_{i} \leq(1+\epsilon)\|q-x_{i}\|_{A}. 
\end{align*}
with probability at least $1 -\delta$. 

This concludes the proof of correctness for the output of \textsc{QueryAll}.
\end{proof}

\section{Evaluation}\label{sec:eval}
\begin{figure}[!h]
\centering
\subfloat[]{\includegraphics[width = 0.242\textwidth]{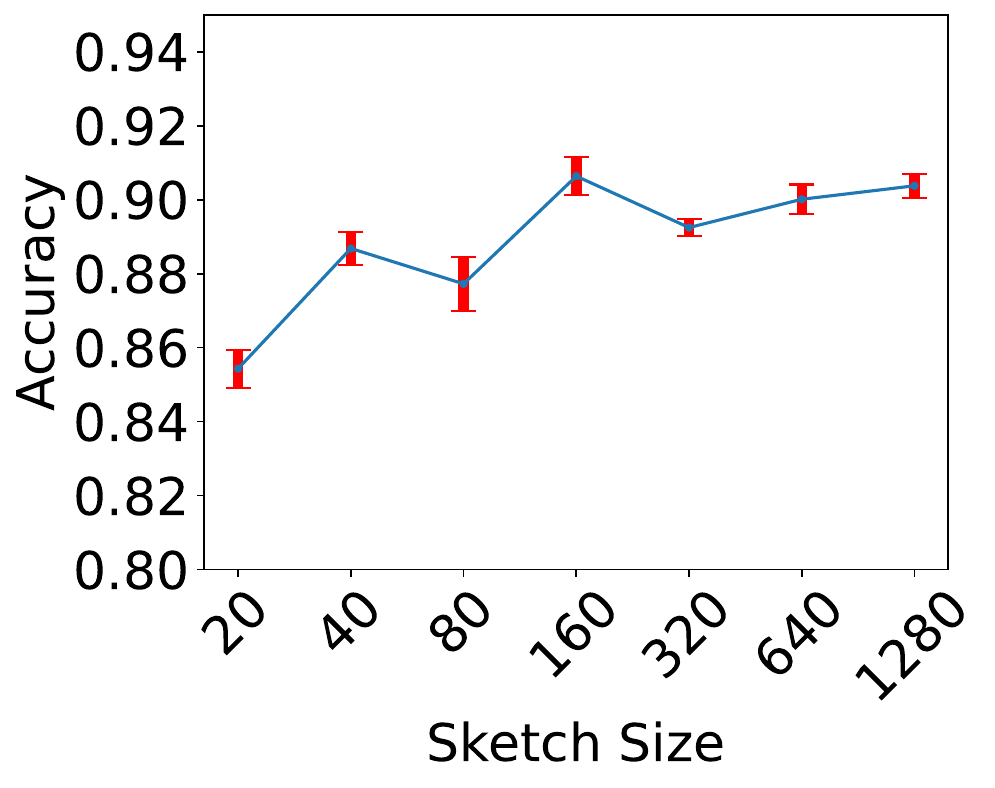}}
\subfloat[]{\includegraphics[width = 0.23\textwidth]{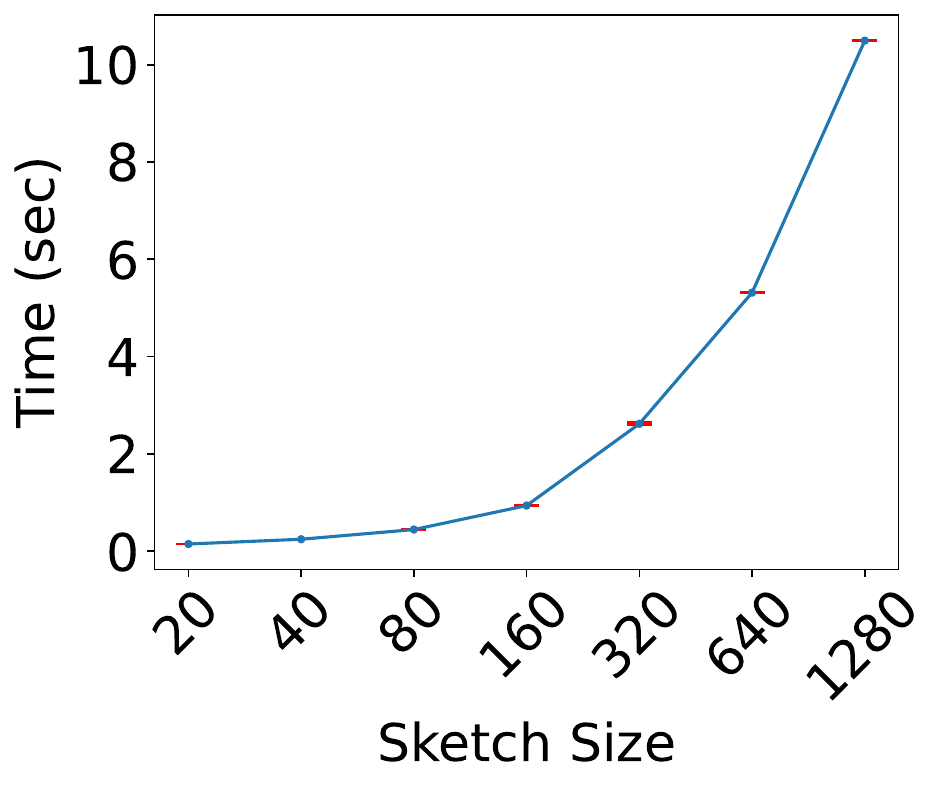}}
\caption{{ Benchmark results for (a) \textsc{QueryAll} accuracy under different $m$. (b) \textsc{QueryAll} time under different $m$.
}}
\label{fig:sketch_size}
\end{figure}

\begin{figure}[!h]
\centering
\subfloat[]{\includegraphics[width = 0.24\textwidth]{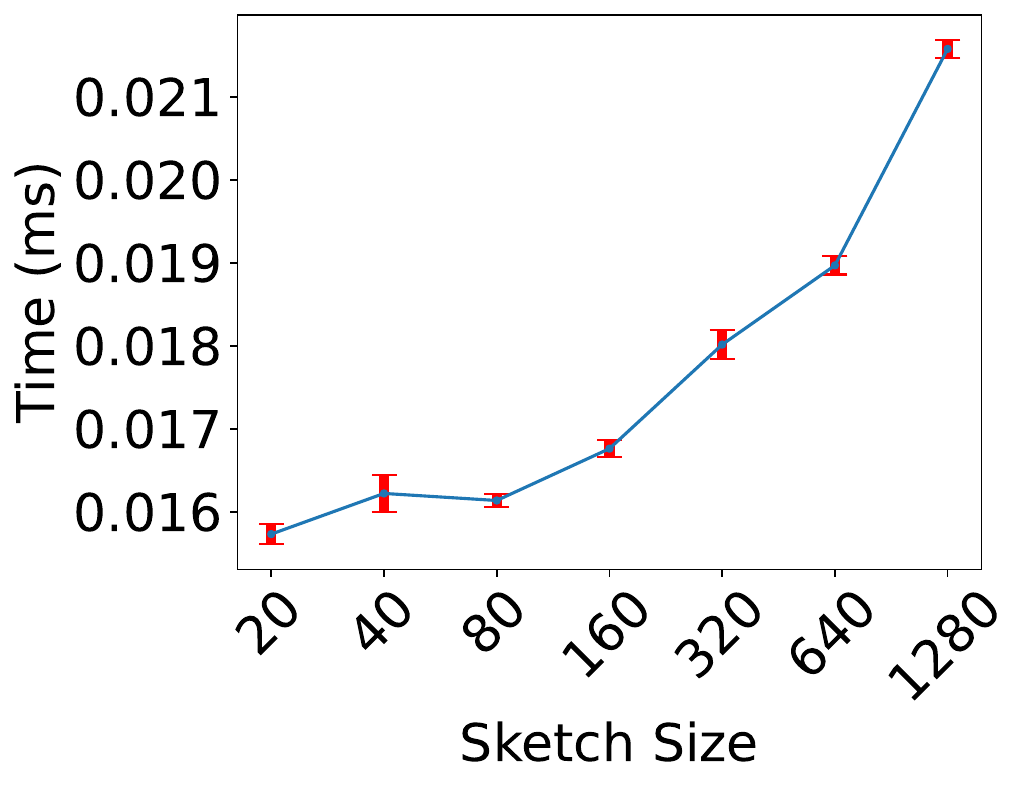}}
\subfloat[]{\includegraphics[width = 0.235\textwidth]{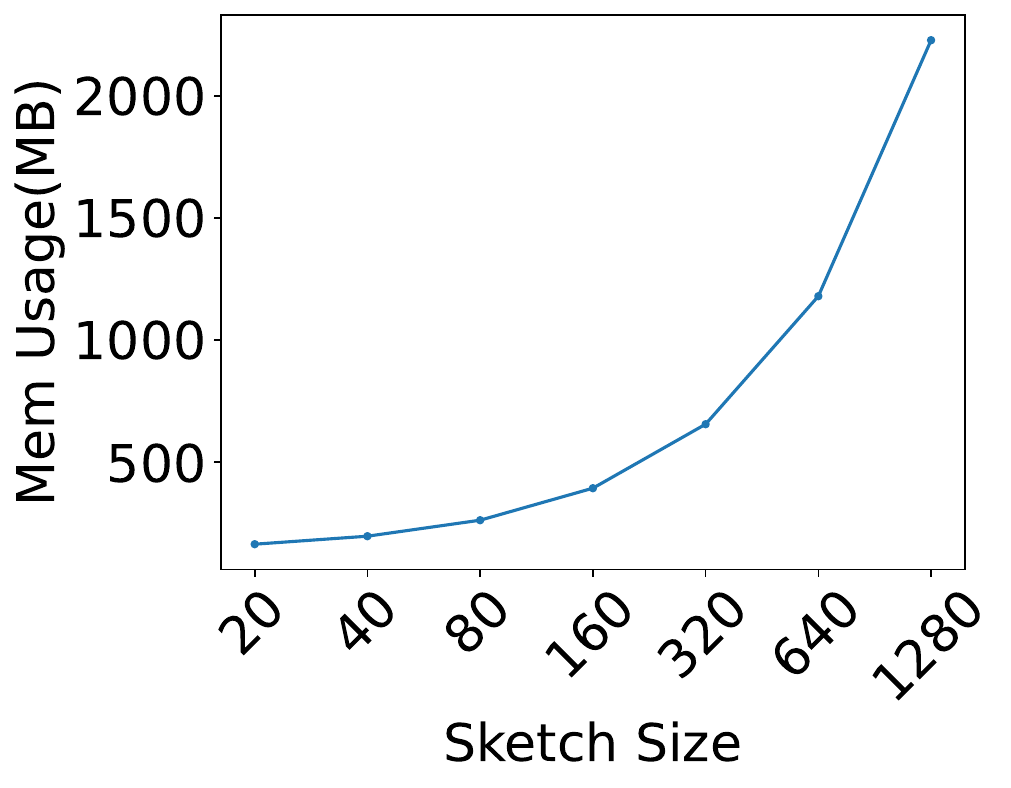}}
\caption{{ Benchmark results for 
(a) \textsc{QueryPair} time under different $m$. (b) Data structure memory usage under different $m$.}}
\label{fig:sketch_size2}
\end{figure}

\paragraph{Experiment setup.} We evaluate our algorithm with gene expression cancer RNA-Seq Data Set from UCI machine learning repository~\cite{an07} where
 $n=800$ and we use the first $d=5120$ columns. We run our experiments on a machine with Intel i7-13700K, 64GB memory, Python 3.6.9 and Numpy 1.22.2. We set the number of sketches $L = 10$ and sample $R = 5$ sketches during \textsc{QueryAll}.
We run \textsc{QueryAll} for $10$ times to obtain the average time consumption and the accuracy of \textsc{QueryAll}. We run 
\textsc{QueryPair} for $10000$ times to obtain the average time. The red error bars in the figures represent the standard deviation. We want to answer the following questions:
\begin{itemize}
    \item How is the execution time affected by the sketch size $m$?
    \item How is the query accuracy affected by the sketch size $m$?
    \item How is the memory consumption of our data structure affected by the sketch size $m$?
\end{itemize}

\paragraph{Query Accuracy.} From the part (a) in Figure~\ref{fig:sketch_size}, we know that the \textsc{QueryAll} accuracy increases from $83.4\%$ to $90.4\%$ as the sketch size increases from $20$ to $1280$. And we can reach around $90\%$ distance estimation accuracy when the sketch size reaches $320$.

\paragraph{Execution time.} 
From the part (b) in Figure~\ref{fig:sketch_size}, \textsc{QueryAll} time grows from $0.15$ second to $10.49$ second as sketch size increases.
From the part (a) in Figure~\ref{fig:sketch_size2}, \textsc{QueryPair} time increases from $0.015$ millisecond to $\sim 0.02$ milliseconds as sketch size increases. 
The query time growth follows that larger sketch size leads to higher computation overhead on $\Pi_{j} U q$.
Compared with the baseline whose sketch size $m=1280$, when sketch size is $320$, the \textsc{QueryAll} is $8.5 \times$ faster and our data structure still achieves $\sim 90\%$ estimation accuracy in \textsc{QueryAll}. 

\paragraph{Memory Consumption.} From the part (b) in Figure~\ref{fig:sketch_size2}, we find that the memory consumption
increases from $164$MB to $2228$MB as the sketch size increases from $20$ to $1280$. Larger sketch size means that more space is required to store the precomputed $\Pi_j U$ and $\Pi_j U x_i$ matrice. Compared with the baseline sketch size  $m=1280$, when sketch size is $320$, the memory consumption is $3.06 \times$ smaller. 
\section{Conclusion}\label{sec:conclusion}
 
Mahalanobis metrics have become increasingly used in machine learning algorithms like nearest neighbor search and clustering. Surprisingly, to date, there has not been any work in applying sketching techniques on algorithms that use Mahalanobis metrics. We initiate this direction of study by looking at one important application, Approximate Distance Estimation (ADE) for Mahalanobis metrics. We use sketching techniques to design a data structure which can handle sequences of adaptive and adversarial queries. Furthermore, our data structure can also handle an online version of the ADE problem, where the underlying Mahalanobis distance changes over time. Our results are the first step towards using sketching techniques for Mahalanobis metrics. We leave the study of using our data structure in conjunction with online Mahalanobis metric learning algorithms like \cite{ssn04} to future work. From our perspective, given that our work is theoretical, it doesn't lead to any negative societal impacts.

\section{Acknowledgements}
Aravind Reddy was supported in-part by NSF grants CCF-1652491, CCF-1934931, and CCF-1955351 during the preparation of this manuscript.

\ifdefined\isarxiv
 \addcontentsline{toc}{section}{References}
 \bibliographystyle{alpha}
 \bibliography{ref}
\else
 \bibliography{ref}

\newcommand{\etalchar}[1]{$^{#1}$}
\begin{thebibliography}{BEJWY20}

\bibitem[ACW17]{acw17}
Haim Avron, Kenneth~L Clarkson, and David~P Woodruff.
\newblock Faster kernel ridge regression using sketching and preconditioning.
\newblock {\em SIAM Journal on Matrix Analysis and Applications},
  38(4):1116--1138, 2017.

\bibitem[AKK{\etalchar{+}}20]{akkpvwz20}
Thomas~D Ahle, Michael Kapralov, Jakob~BT Knudsen, Rasmus Pagh, Ameya
  Velingker, David~P Woodruff, and Amir Zandieh.
\newblock Oblivious sketching of high-degree polynomial kernels.
\newblock In {\em Proceedings of the Fourteenth Annual ACM-SIAM Symposium on
  Discrete Algorithms}, pages 141--160. SIAM, 2020.

\bibitem[ALS{\etalchar{+}}18]{alszz18}
Alexandr Andoni, Chengyu Lin, Ying Sheng, Peilin Zhong, and Ruiqi Zhong.
\newblock Subspace embedding and linear regression with orlicz norm.
\newblock In {\em International Conference on Machine Learning (ICML)}, pages
  224--233. PMLR, 2018.

\bibitem[Alt92]{alt92}
Naomi~S Altman.
\newblock An introduction to kernel and nearest-neighbor nonparametric
  regression.
\newblock {\em The American Statistician}, 46(3):175--185, 1992.

\bibitem[AMS97]{ams97}
Christopher~G. Atkeson, Andrew~W. Moore, and Stefan Schaal.
\newblock Locally weighted learning.
\newblock {\em Artif. Intell. Rev.}, 11(1-5):11--73, 1997.

\bibitem[AMS99]{ams99}
Noga Alon, Yossi Matias, and Mario Szegedy.
\newblock The space complexity of approximating the frequency moments.
\newblock {\em Journal of Computer and system sciences}, 58(1):137--147, 1999.

\bibitem[AN07]{an07}
Arthur Asuncion and David Newman.
\newblock Uci machine learning repository, 2007.

\bibitem[AS23]{as23}
Josh Alman and Zhao Song.
\newblock Fast attention requires bounded entries.
\newblock {\em arXiv preprint arXiv:2302.13214}, 2023.

\bibitem[BCM{\etalchar{+}}17]{bcm+17}
Battista Biggio, Igino Corona, Davide Maiorca, Blaine Nelson, Nedim Srndic,
  Pavel Laskov, Giorgio Giacinto, and Fabio Roli.
\newblock Evasion attacks against machine learning at test time.
\newblock {\em arXiv preprint arXiv:1708.06131}, 2017.

\bibitem[BEJWY20]{bejwy20}
Omri Ben-Eliezer, Rajesh Jayaram, David~P Woodruff, and Eylon Yogev.
\newblock A framework for adversarially robust streaming algorithms.
\newblock In {\em Proceedings of the 39th ACM SIGMOD-SIGACT-SIGAI symposium on
  principles of database systems}, pages 63--80, 2020.

\bibitem[BHK20]{bhk20}
Avrim Blum, John Hopcroft, and Ravindran Kannan.
\newblock {\em Foundations of data science}.
\newblock Cambridge University Press, 2020.

\bibitem[BM01]{bm01}
Ella Bingham and Heikki Mannila.
\newblock Random projection in dimensionality reduction: Applications to image
  and text data.
\newblock In {\em Proceedings of the Seventh ACM SIGKDD International
  Conference on Knowledge Discovery and Data Mining}, KDD '01, page 245–250,
  New York, NY, USA, 2001. Association for Computing Machinery.

\bibitem[BS23]{bs23}
Jan den~van Brand and Zhao Song.
\newblock A $\sqrt{n}$ passes streaming algorithm for solving bipartite
  matching exactly.
\newblock {\em Manuscript}, 2023.

\bibitem[BWZ16]{bwz16}
Christos Boutsidis, David~P Woodruff, and Peilin Zhong.
\newblock Optimal principal component analysis in distributed and streaming
  models.
\newblock In {\em Proceedings of the forty-eighth annual ACM symposium on
  Theory of Computing (STOC)}, pages 236--249, 2016.

\bibitem[CBK09]{cbk09}
Varun Chandola, Arindam Banerjee, and Vipin Kumar.
\newblock Anomaly detection: A survey.
\newblock {\em ACM Comput. Surv.}, 41(3), jul 2009.

\bibitem[CCFC02]{ccf02}
Moses Charikar, Kevin Chen, and Martin Farach-Colton.
\newblock Finding frequent items in data streams.
\newblock In {\em International Colloquium on Automata, Languages, and
  Programming}, pages 693--703. Springer, 2002.

\bibitem[CCLY19]{ccly19}
Michael~B Cohen, Ben Cousins, Yin~Tat Lee, and Xin Yang.
\newblock A near-optimal algorithm for approximating the john ellipsoid.
\newblock In {\em Conference on Learning Theory}, pages 849--873. PMLR, 2019.

\bibitem[Cla88]{cla88}
Kenneth~L. Clarkson.
\newblock A randomized algorithm for closest-point queries.
\newblock {\em SIAM J. Comput.}, 17(4):830–847, aug 1988.

\bibitem[CN20]{cn20}
Yeshwanth Cherapanamjeri and Jelani Nelson.
\newblock On adaptive distance estimation.
\newblock In {\em Advances in Neural Information Processing Systems}, 2020.

\bibitem[CW13]{cw13}
Kenneth~L. Clarkson and David~P. Woodruff.
\newblock Low rank approximation and regression in input sparsity time.
\newblock In {\em Proceedings of the 45th Annual ACM Symposium on Theory of
  Computing (STOC)}, 2013.

\bibitem[CY21]{cy21}
Yifan Chen and Yun Yang.
\newblock Accumulations of projections—a unified framework for random
  sketches in kernel ridge regression.
\newblock In {\em International Conference on Artificial Intelligence and
  Statistics}, pages 2953--2961. PMLR, 2021.

\bibitem[DFH{\etalchar{+}}15a]{dfh+15a}
Cynthia Dwork, Vitaly Feldman, Moritz Hardt, Toni Pitassi, Omer Reingold, and
  Aaron Roth.
\newblock Generalization in adaptive data analysis and holdout reuse.
\newblock {\em Advances in Neural Information Processing Systems}, 28, 2015.

\bibitem[DFH{\etalchar{+}}15b]{dfh+15b}
Cynthia Dwork, Vitaly Feldman, Moritz Hardt, Toniann Pitassi, Omer Reingold,
  and Aaron Roth.
\newblock The reusable holdout: Preserving validity in adaptive data analysis.
\newblock {\em Science}, 349(6248):636--638, 2015.

\bibitem[DFH{\etalchar{+}}15c]{dfh+15c}
Cynthia Dwork, Vitaly Feldman, Moritz Hardt, Toniann Pitassi, Omer Reingold,
  and Aaron~Leon Roth.
\newblock Preserving statistical validity in adaptive data analysis.
\newblock In {\em Proceedings of the forty-seventh annual ACM symposium on
  Theory of computing}, pages 117--126, 2015.

\bibitem[DHY{\etalchar{+}}22]{dhy+22}
Guohao Dai, Guyue Huang, Shang Yang, Zhongming Yu, Hengrui Zhang, Yufei Ding,
  Yuan Xie, Huazhong Yang, and Yu~Wang.
\newblock Heuristic adaptability to input dynamics for spmm on gpus.
\newblock In {\em Proceedings of the 59th ACM/IEEE Design Automation
  Conference}, pages 595--600, 2022.

\bibitem[DIIM04]{diim04}
Mayur Datar, Nicole Immorlica, Piotr Indyk, and Vahab~S. Mirrokni.
\newblock Locality-sensitive hashing scheme based on p-stable distributions.
\newblock In {\em Proceedings of the Twentieth Annual Symposium on
  Computational Geometry}, SCG '04, page 253–262, New York, NY, USA, 2004.
  Association for Computing Machinery.

\bibitem[DJS{\etalchar{+}}19]{djssw19}
Huaian Diao, Rajesh Jayaram, Zhao Song, Wen Sun, and David Woodruff.
\newblock Optimal sketching for kronecker product regression and low rank
  approximation.
\newblock {\em Advances in neural information processing systems}, 32, 2019.

\bibitem[DKJ{\etalchar{+}}07]{dkjsd07}
Jason~V Davis, Brian Kulis, Prateek Jain, Suvrit Sra, and Inderjit~S Dhillon.
\newblock Information-theoretic metric learning.
\newblock In {\em Proceedings of the 24th international conference on Machine
  learning}, 2007.

\bibitem[DLS23a]{dls23}
Yichuan Deng, Zhihang Li, and Zhao Song.
\newblock Attention scheme inspired softmax regression.
\newblock {\em arXiv preprint arXiv:2304.10411}, 2023.

\bibitem[DLS23b]{dls23_sensing}
Yichuan Deng, Zhihang Li, and Zhao Song.
\newblock An improved sample complexity for rank-1 matrix sensing.
\newblock {\em arXiv preprint arXiv:2303.06895}, 2023.

\bibitem[DSSU17]{dssu17}
Cynthia Dwork, Adam Smith, Thomas Steinke, and Jonathan Ullman.
\newblock Exposed! a survey of attacks on private data.
\newblock {\em Annual Review of Statistics and Its Application}, 4(1):61--84,
  2017.

\bibitem[EMZ21]{emz21}
Hossein Esfandiari, Vahab Mirrokni, and Peilin Zhong.
\newblock Almost linear time density level set estimation via dbscan.
\newblock In {\em AAAI}, 2021.

\bibitem[FQZ{\etalchar{+}}21]{fqz+21}
Boyuan Feng, Lianke Qin, Zhenfei Zhang, Yufei Ding, and Shumo Chu.
\newblock Zen: An optimizing compiler for verifiable, zero-knowledge neural
  network inferences.
\newblock {\em Cryptology ePrint Archive}, 2021.

\bibitem[FSM06]{fsm06}
Andrea Frome, Yoram Singer, and Jitendra Malik.
\newblock Image retrieval and classification using local distance functions.
\newblock {\em Advances in neural information processing systems}, 19, 2006.

\bibitem[GQSW22]{gqsw22}
Yeqi Gao, Lianke Qin, Zhao Song, and Yitan Wang.
\newblock A sublinear adversarial training algorithm.
\newblock {\em arXiv preprint arXiv:2208.05395}, 2022.

\bibitem[GR05]{gr05}
Amir Globerson and Sam Roweis.
\newblock Metric learning by collapsing classes.
\newblock {\em Advances in neural information processing systems}, 18, 2005.

\bibitem[GS22]{gs22}
Yuzhou Gu and Zhao Song.
\newblock A faster small treewidth sdp solver.
\newblock {\em arXiv preprint arXiv:2211.06033}, 2022.

\bibitem[GSS15]{gss15}
Ian~J. Goodfellow, Jonathon Shlens, and Christian Szegedy.
\newblock Explaining and harnessing adversarial examples.
\newblock In Yoshua Bengio and Yann LeCun, editors, {\em 3rd International
  Conference on Learning Representations, {ICLR} 2015, San Diego, CA, USA, May
  7-9, 2015, Conference Track Proceedings}, 2015.

\bibitem[GSY23]{gsy23}
Yeqi Gao, Zhao Song, and Junze Yin.
\newblock An iterative algorithm for rescaled hyperbolic functions regression.
\newblock {\em arXiv preprint arXiv:2305.00660}, 2023.

\bibitem[GSYZ23]{gsyz23}
Yuzhou Gu, Zhao Song, Junze Yin, and Lichen Zhang.
\newblock Low rank matrix completion via robust alternating minimization in
  nearly linear time.
\newblock {\em arXiv preprint arXiv:2302.11068}, 2023.

\bibitem[GSZ23]{gsz23}
Yuzhou Gu, Zhao Song, and Lichen Zhang.
\newblock A nearly-linear time algorithm for structured support vector
  machines.
\newblock {\em arXiv preprint arXiv:2307.07735}, 2023.

\bibitem[HDWY20]{hdw20}
Guyue Huang, Guohao Dai, Yu~Wang, and Huazhong Yang.
\newblock Ge-spmm: General-purpose sparse matrix-matrix multiplication on gpus
  for graph neural networks.
\newblock In {\em SC20: International Conference for High Performance
  Computing, Networking, Storage and Analysis}, pages 1--12. IEEE, 2020.

\bibitem[HMPW16]{hmpw16}
Moritz Hardt, Nimrod Megiddo, Christos Papadimitriou, and Mary Wootters.
\newblock Strategic classification.
\newblock In {\em Proceedings of the 2016 ACM conference on innovations in
  theoretical computer science}, pages 111--122, 2016.

\bibitem[HMR18]{hmr18}
Filip Hanzely, Konstantin Mishchenko, and Peter Richt{\'a}rik.
\newblock Sega: Variance reduction via gradient sketching.
\newblock {\em Advances in Neural Information Processing Systems}, 31, 2018.

\bibitem[Hoe63]{h63}
Wassily Hoeffding.
\newblock Probability inequalities for sums of bounded random variables.
\newblock {\em Journal of the American Statistical Association},
  58(301):13--30, 1963.

\bibitem[HSS08]{hss08}
Thomas Hofmann, Bernhard Sch{\"o}lkopf, and Alexander~J Smola.
\newblock Kernel methods in machine learning.
\newblock {\em The annals of statistics}, 36(3):1171--1220, 2008.

\bibitem[IM98]{im98}
Piotr Indyk and Rajeev Motwani.
\newblock Approximate nearest neighbors: Towards removing the curse of
  dimensionality.
\newblock In {\em Proceedings of the Thirtieth Annual ACM Symposium on Theory
  of Computing}, STOC '98, page 604–613, New York, NY, USA, 1998. Association
  for Computing Machinery.

\bibitem[JKDG08]{jkdg08}
Prateek Jain, Brian Kulis, Inderjit Dhillon, and Kristen Grauman.
\newblock Online metric learning and fast similarity search.
\newblock In {\em Advances in Neural Information Processing Systems}, 2008.

\bibitem[JL84]{jl84}
William~B Johnson and Joram Lindenstrauss.
\newblock Extensions of lipschitz mappings into a hilbert space.
\newblock {\em Contemporary mathematics}, 26(189-206):1, 1984.

\bibitem[JLL{\etalchar{+}}21]{jll+21}
Shuli Jiang, Dennis Li, Irene~Mengze Li, Arvind~V Mahankali, and David
  Woodruff.
\newblock Streaming and distributed algorithms for robust column subset
  selection.
\newblock In {\em International Conference on Machine Learning}, pages
  4971--4981. PMLR, 2021.

\bibitem[JLSW20]{jlsw20}
Haotian Jiang, Yin~Tat Lee, Zhao Song, and Sam Chiu-wai Wong.
\newblock An improved cutting plane method for convex optimization,
  convex-concave games and its applications.
\newblock In {\em STOC}, 2020.

\bibitem[JPWZ21]{jpwz21}
Shuli Jiang, Hai Pham, David Woodruff, and Richard Zhang.
\newblock Optimal sketching for trace estimation.
\newblock {\em Advances in Neural Information Processing Systems}, 34, 2021.

\bibitem[Kle97]{kle97}
Jon~M Kleinberg.
\newblock Two algorithms for nearest-neighbor search in high dimensions.
\newblock In {\em Proceedings of the twenty-ninth annual ACM symposium on
  Theory of computing}, pages 599--608, 1997.

\bibitem[KOR00]{kor00}
Eyal Kushilevitz, Rafail Ostrovsky, and Yuval Rabani.
\newblock Efficient search for approximate nearest neighbor in high dimensional
  spaces.
\newblock {\em SIAM Journal on Computing}, 30(2):457--474, 2000.

\bibitem[LCLS17]{lcls17}
Yanpei Liu, Xinyun Chen, Chang Liu, and Dawn Song.
\newblock Delving into transferable adversarial examples and black-box attacks.
\newblock In {\em 5th International Conference on Learning Representations,
  {ICLR} 2017, Toulon, France, April 24-26, 2017, Conference Track
  Proceedings}, 2017.

\bibitem[LRSV21]{lrs+21}
Hunter Lang, Aravind Reddy, David Sontag, and Aravindan Vijayaraghavan.
\newblock Beyond perturbation stability: Lp recovery guarantees for map
  inference on noisy stable instances.
\newblock In {\em International Conference on Artificial Intelligence and
  Statistics}, pages 3043--3051. PMLR, 2021.

\bibitem[LSZ19]{lsz19}
Yin~Tat Lee, Zhao Song, and Qiuyi Zhang.
\newblock Solving empirical risk minimization in the current matrix
  multiplication time.
\newblock In {\em COLT}, 2019.

\bibitem[LSZ23a]{lsz23}
Zhihang Li, Zhao Song, and Tianyi Zhou.
\newblock Solving regularized exp, cosh and sinh regression problems.
\newblock {\em arXiv preprint, 2303.15725}, 2023.

\bibitem[LSZ{\etalchar{+}}23b]{lszzz23}
S.~Cliff Liu, Zhao Song, Hengjie Zhang, Lichen Zhang, and Tianyi Zhou.
\newblock Space-efficient interior point method, with applications to linear
  programming and maximum weight bipartite matching.
\newblock In {\em International Colloquium on Automata, Languages and
  Programming (ICALP)}, pages 88:1--88:14, 2023.

\bibitem[Mei93]{mei93}
S.~Meiser.
\newblock Point location in arrangements of hyperplanes.
\newblock {\em Inf. Comput.}, 106(2):286–303, oct 1993.

\bibitem[MM13]{mm13}
Xiangrui Meng and Michael~W Mahoney.
\newblock Low-distortion subspace embeddings in input-sparsity time and
  applications to robust linear regression.
\newblock In {\em Proceedings of the forty-fifth annual ACM symposium on Theory
  of computing (STOC)}, pages 91--100, 2013.

\bibitem[MRS20]{mrs20}
Konstantin Makarychev, Aravind Reddy, and Liren Shan.
\newblock Improved guarantees for k-means++ and k-means++ parallel.
\newblock {\em Advances in Neural Information Processing Systems},
  33:16142--16152, 2020.

\bibitem[NN13]{nn13}
Jelani Nelson and Huy~L Nguy{\^e}n.
\newblock Osnap: Faster numerical linear algebra algorithms via sparser
  subspace embeddings.
\newblock In {\em Proceedings of the 54th Annual IEEE Symposium on Foundations
  of Computer Science (FOCS)}, 2013.

\bibitem[PMG16]{pmg16}
Nicolas Papernot, Patrick~D. McDaniel, and Ian~J. Goodfellow.
\newblock Transferability in machine learning: from phenomena to black-box
  attacks using adversarial samples.
\newblock {\em CoRR}, abs/1605.07277, 2016.

\bibitem[PMG{\etalchar{+}}17]{pmg+17}
Nicolas Papernot, Patrick McDaniel, Ian Goodfellow, Somesh Jha, Z~Berkay Celik,
  and Ananthram Swami.
\newblock Practical black-box attacks against machine learning.
\newblock In {\em Proceedings of the 2017 ACM on Asia conference on computer
  and communications security}, pages 506--519, 2017.

\bibitem[QGT{\etalchar{+}}19]{qgt+19}
Lianke Qin, Yifan Gong, Tianqi Tang, Yutian Wang, and Jiangming Jin.
\newblock Training deep nets with progressive batch normalization on
  multi-gpus.
\newblock {\em International Journal of Parallel Programming}, 47(3):373--387,
  2019.

\bibitem[QJS{\etalchar{+}}22]{qjs+22}
Lianke Qin, Rajesh Jayaram, Elaine Shi, Zhao Song, Danyang Zhuo, and Shumo Chu.
\newblock Adore: Differentially oblivious relational database operators.
\newblock In {\em VLDB}, 2022.

\bibitem[QJZL15]{qjzl15}
Qi~Qian, Rong Jin, Shenghuo Zhu, and Yuanqing Lin.
\newblock Fine-grained visual categorization via multi-stage metric learning.
\newblock In {\em Proceedings of the IEEE Conference on Computer Vision and
  Pattern Recognition}, pages 3716--3724, 2015.

\bibitem[QRS{\etalchar{+}}22]{qrs+22}
Lianke Qin, Aravind Reddy, Zhao Song, Zhaozhuo Xu, and Danyang Zhuo.
\newblock Adaptive and dynamic multi-resolution hashing for pairwise
  summations.
\newblock In {\em BigData}, 2022.

\bibitem[QRS23]{full}
Lianke Qin, Aravind Reddy, and Zhao Song.
\newblock Online adaptive mahalanobis distance estimation.
\newblock {\em arXiv preprint arXiv:2309.01030}, 2023.

\bibitem[QSYZ23]{qsyz23}
Lianke Qin, Zhao Song, Xin Yang, and Lichen Zhang.
\newblock Differentially private learning of over-parameterized two-layer
  neural networks.
\newblock {\em Manuscript}, 2023.

\bibitem[QSZ23a]{qsz23b}
Lianke Qin, Zhao Song, and Lichen Zhang.
\newblock An online algorithm for attention construction.
\newblock {\em Manuscript}, 2023.

\bibitem[QSZ23b]{qsz23}
Lianke Qin, Zhao Song, and Ruizhe Zhang.
\newblock A general algorithm for solving rank-one matrix sensing.
\newblock {\em arXiv preprint arXiv:2303.12298}, 2023.

\bibitem[QSZZ23]{qszz23}
Lianke Qin, Zhao Song, Lichen Zhang, and Danyang Zhuo.
\newblock An online and unified algorithm for projection matrix vector
  multiplication with application to empirical risk minimization.
\newblock In {\em International Conference on Artificial Intelligence and
  Statistics}, pages 101--156. PMLR, 2023.

\bibitem[RPU{\etalchar{+}}20]{rpuisbga20}
Daniel Rothchild, Ashwinee Panda, Enayat Ullah, Nikita Ivkin, Ion Stoica,
  Vladimir Braverman, Joseph Gonzalez, and Raman Arora.
\newblock Fetchsgd: Communication-efficient federated learning with sketching.
\newblock In {\em International Conference on Machine Learning}, pages
  8253--8265. PMLR, 2020.

\bibitem[RRS{\etalchar{+}}22]{rrs+22}
Aravind Reddy, Ryan~A Rossi, Zhao Song, Anup Rao, Tung Mai, Nedim Lipka, Gang
  Wu, Eunyee Koh, and Nesreen Ahmed.
\newblock One-pass algorithms for map inference of nonsymmetric determinantal
  point processes.
\newblock In {\em International Conference on Machine Learning}, pages
  18463--18482. PMLR, 2022.

\bibitem[RSW16]{rsw16}
Ilya Razenshteyn, Zhao Song, and David~P. Woodruff.
\newblock Weighted low rank approximations with provable guarantees.
\newblock In {\em Proceedings of the Forty-Eighth Annual ACM Symposium on
  Theory of Computing}, STOC '16, page 250–263, 2016.

\bibitem[RSZ22]{rsz22}
Aravind Reddy, Zhao Song, and Lichen Zhang.
\newblock Dynamic tensor product regression.
\newblock In {\em Conference on Neural Information Processing Systems
  (NeurIPS)}, pages 4791--4804, 2022.

\bibitem[SDI08]{sdi08}
Gregory Shakhnarovich, Trevor Darrell, and Piotr Indyk.
\newblock Nearest-neighbor methods in learning and vision.
\newblock {\em IEEE Trans. Neural Networks}, 19:377, 2008.

\bibitem[SJ03]{sj03}
Matthew Schultz and Thorsten Joachims.
\newblock Learning a distance metric from relative comparisons.
\newblock {\em Advances in neural information processing systems}, 16, 2003.

\bibitem[SSSN04]{ssn04}
Shai Shalev-Shwartz, Yoram Singer, and Andrew~Y Ng.
\newblock Online and batch learning of pseudo-metrics.
\newblock In {\em Proceedings of the twenty-first international conference on
  Machine learning}, 2004.

\bibitem[SSX21]{ssx21}
Anshumali Shrivastava, Zhao Song, and Zhaozhuo Xu.
\newblock Sublinear least-squares value iteration via locality sensitive
  hashing.
\newblock {\em arXiv preprint arXiv:2105.08285}, 2021.

\bibitem[SSZ23]{ssz23}
Ritwik Sinha, Zhao Song, and Tianyi Zhou.
\newblock A mathematical abstraction for balancing the trade-off between
  creativity and reality in large language models.
\newblock {\em arXiv preprint arXiv:2306.02295}, 2023.

\bibitem[SWYZ21]{swyz21}
Zhao Song, David Woodruff, Zheng Yu, and Lichen Zhang.
\newblock Fast sketching of polynomial kernels of polynomial degree.
\newblock In {\em International Conference on Machine Learning}, pages
  9812--9823. PMLR, 2021.

\bibitem[SWYZ23]{swyz23}
Zhao Song, Yitan Wang, Zheng Yu, and Lichen Zhang.
\newblock Sketching for first order method: efficient algorithm for
  low-bandwidth channel and vulnerability.
\newblock In {\em International Conference on Machine Learning (ICML)}, pages
  32365--32417. PMLR, 2023.

\bibitem[SWZ17]{swz17}
Zhao Song, David~P Woodruff, and Peilin Zhong.
\newblock Low rank approximation with entrywise $\ell_1$-norm error.
\newblock In {\em Proceedings of the 49th Annual Symposium on the Theory of
  Computing (STOC)}, 2017.

\bibitem[SWZ19a]{swz19_neurips2}
Zhao Song, David Woodruff, and Peilin Zhong.
\newblock Towards a zero-one law for column subset selection.
\newblock {\em Advances in Neural Information Processing Systems},
  32:6123--6134, 2019.

\bibitem[SWZ19b]{swz19_soda}
Zhao Song, David~P Woodruff, and Peilin Zhong.
\newblock Relative error tensor low rank approximation.
\newblock In {\em SODA}. arXiv preprint arXiv:1704.08246, 2019.

\bibitem[SY21]{sy21}
Zhao Song and Zheng Yu.
\newblock Oblivious sketching-based central path method for solving linear
  programming problems.
\newblock In {\em 38th International Conference on Machine Learning (ICML)},
  2021.

\bibitem[SYYZ22]{syyz22}
Zhao Song, Xin Yang, Yuanyuan Yang, and Tianyi Zhou.
\newblock Faster algorithm for structured john ellipsoid computation.
\newblock {\em arXiv preprint arXiv:2211.14407}, 2022.

\bibitem[SYYZ23]{syyz23_dp}
Zhao Song, Xin Yang, Yuanyuan Yang, and Lichen Zhang.
\newblock Sketching meets differential privacy: fast algorithm for dynamic
  kronecker projection maintenance.
\newblock In {\em International Conference on Machine Learning (ICML)}, pages
  32418--32462. PMLR, 2023.

\bibitem[SYZ21]{syz21}
Zhao Song, Shuo Yang, and Ruizhe Zhang.
\newblock Does preprocessing help training over-parameterized neural networks?
\newblock {\em Advances in Neural Information Processing Systems (NeurIPS)},
  34, 2021.

\bibitem[SZZ21]{szz21}
Zhao Song, Lichen Zhang, and Ruizhe Zhang.
\newblock Training multi-layer over-parametrized neural network in subquadratic
  time.
\newblock {\em arXiv preprint arXiv:2112.07628}, 2021.

\bibitem[Tal23]{talla2023map}
Aravind~Reddy Talla.
\newblock {\em On MAP Inference of Ferromagnetic Potts Models and Nonsymmetric
  Determinantal Point Processes}.
\newblock PhD thesis, Northwestern University, 2023.

\bibitem[WZ16]{wz16}
David~P Woodruff and Peilin Zhong.
\newblock Distributed low rank approximation of implicit functions of a matrix.
\newblock In {\em 2016 IEEE 32nd International Conference on Data Engineering
  (ICDE)}, pages 847--858. IEEE, 2016.

\bibitem[WZD{\etalchar{+}}20]{wzd+20}
Ruosong Wang, Peilin Zhong, Simon~S Du, Russ~R Salakhutdinov, and Lin~F Yang.
\newblock Planning with general objective functions: Going beyond total
  rewards.
\newblock In {\em Annual Conference on Neural Information Processing Systems
  (NeurIPS)}, 2020.

\bibitem[XSS21]{xss21}
Zhaozhuo Xu, Zhao Song, and Anshumali Shrivastava.
\newblock Breaking the linear iteration cost barrier for some well-known
  conditional gradient methods using maxip data-structures.
\newblock {\em Advances in Neural Information Processing Systems (NeurIPS)},
  34, 2021.

\bibitem[XZZ18]{xzz18}
Chang Xiao, Peilin Zhong, and Changxi Zheng.
\newblock Bourgan: generative networks with metric embeddings.
\newblock In {\em Proceedings of the 32nd International Conference on Neural
  Information Processing Systems (NeurIPS)}, pages 2275--2286, 2018.

\bibitem[YDH{\etalchar{+}}21]{ydh+21}
Zhongming Yu, Guohao Dai, Guyue Huang, Yu~Wang, and Huazhong Yang.
\newblock Exploiting online locality and reduction parallelism for sampled
  dense matrix multiplication on gpus.
\newblock In {\em 2021 IEEE 39th International Conference on Computer Design
  (ICCD)}, pages 567--574. IEEE, 2021.

\bibitem[ZHA{\etalchar{+}}21]{zhasks21}
Amir Zandieh, Insu Han, Haim Avron, Neta Shoham, Chaewon Kim, and Jinwoo Shin.
\newblock Scaling neural tangent kernels via sketching and random features.
\newblock {\em Advances in Neural Information Processing Systems}, 34, 2021.

\bibitem[Zha22]{z22}
Lichen Zhang.
\newblock Speeding up optimizations via data structures: Faster search, sample
  and maintenance.
\newblock Master's thesis, Carnegie Mellon University, 2022.

\bibitem[ZYD{\etalchar{+}}22]{zyd+22}
Hengrui Zhang, Zhongming Yu, Guohao Dai, Guyue Huang, Yufei Ding, Yuan Xie, and
  Yu~Wang.
\newblock Understanding gnn computational graph: A coordinated computation, io,
  and memory perspective.
\newblock {\em Proceedings of Machine Learning and Systems}, 4:467--484, 2022.

\end{thebibliography}
\bibliographystyle{IEEEtran} 

\fi




We then present the proofs for online adaptive Mahalanobis distance maintenance. We present the supplementary algorithm description for \textsc{SampleExact} in Section~\ref{sec:proof_running_time}.
We propose sampling algorithm based on our Mahalanobis distance estimation data structure in Section~\ref{sec:approx_sample}.
Then we present supplementary experiments in Section~\ref{sec:exp_app}.

\begin{algorithm}[!ht]\caption{Segment Tree}\label{alg:segment_tree}
\begin{algorithmic}[1]
\State {\bf data structure} {\textsc{TreeNode}}
\State {\bf members}
\State \hspace{4mm} $\textsc{TreeNode}$ $l, r$ \Comment{Left and right child node}
\State \hspace{4mm} $v \in \R$ \Comment{Value stored in current node}
\State \hspace{4mm} $L, R \in \mathbb{N}_{+} $ \Comment{The left and right boundary of node interval }
\State {\bf end members}
\State {\bf end data structure} 
\State 
\State {\bf data structure} {\textsc{Tree}} 
\State {\bf members}
\State \hspace{4mm} $t \in \textsc{TreeNode}$ \Comment{$t$ is the root node }
\State {\bf end members}
\State
\Procedure{Init}{$a \in \R^{n}, c \in \textsc{TreeNode}, L \in \mathbb{N}_{+}, R \in \mathbb{N}_{+}$}.
    \State $c.L \gets L, c.R \gets R$ \Comment{Assign the interval of current node}
    \If{$L = R$}
        \State $c.v \gets a_l$  \Comment{Assign current node value}
    \Else 
        \State $m \gets (l + r)/2$
        \State $\textsc{Init}(a, c.l, l, m)$
        \State $\textsc{Init}(a, c.r, m+1, r)$
        \State $c.v \gets c.l.v + c.r.v$ \Comment{Assign current node value}
    \EndIf
\EndProcedure
\State
\Procedure{Query}{$c \in \textsc{TreeNode}, L \in \mathbb{N}_{+}, R \in \mathbb{N}_{+} $}
    \If{$ c.R < L$ or $c.L > R$} \Comment{if the requested range is outside interval of current node}
        \State \Return NULL
    \EndIf
    \If{$c.L \leq L $ and $c.R \geq R$} \Comment{if current node interval is completely inside requested range}
        \State \Return $c.v$
    \EndIf
    \State \Return $\textsc{Query}(c.l, L, R) + \textsc{Query}(c.r, L, R)$ 
\EndProcedure
\State
\Procedure{Update}{$c \in \textsc{TreeNode}, i \in [n], v' \in \R $}
    \If{$c.L = c.R$}\Comment{If current node is leaf node}
        \State $c.v \gets v'$
    \EndIf
    \If{$L \leq i$ and $(c.L + c.R)/2 \geq i$}
        \State $\textsc{Update}(c.l, i, v')$ \Comment{Update left child node}
    \Else 
         \State $\textsc{Update}(c.r, i, v')$ \Comment{Update right child node}
    \EndIf
    \State $c.v \gets c.l.v + c.r.v$ \Comment{Update current node}
\EndProcedure
\State {\bf end data structure}
\end{algorithmic}
\end{algorithm}

\begin{algorithm}[!ht]\caption{Online Adaptive Mahalanobis Distance Maintenance: SampleExact. }\label{alg:sample_exact}
\begin{algorithmic}[1]
\State {\bf data structure} {\sc MetricMaintenance} \Comment{Theorem~\ref{thm:main}}
\Procedure{SubSum}{$q \in \R^{d}, i \in [n], j \in [n]$} \Comment{Compute $\sum_{\ell = i}^{j} \| q - x_{\ell}\|_{A}$. Lemma~\ref{lem:subsum}}.
    \State $s \gets \textsc{tree}.\textsc{Query}(i, j)$  \Comment{$s \in \R^d$}
    \State $r \gets (j-i+1) \cdot \| U q \|_2 + 2 q^{\top} U^{\top} U s  + \| U  s\|_2$
    \State \Return $r$
\EndProcedure
\Procedure{SubSampleExact}{$q \in \R^d, l \in \mathbb{N}_{+}, r \in \mathbb{N}_{+}$}
    \While{$l < r$}
        \State $m = l + (r - l)/2$
        \State Sample a random value $v$ from $[0,1]$ uniformly.
        \State $t = \frac{\textsc{SubSum}(q, l, m)}{\textsc{SubSum}(q, l, r)}$\Comment{Threshold to decide next round sampling}
        \If{$v < t$}
            \State \Return $\textsc{SubSampleExact}(q, l, m)$
        \Else
            \State \Return $\textsc{SubSampleExact}(q, m+1, r)$
        \EndIf 
    \EndWhile
    \State \Return $l$
\EndProcedure
\Procedure{SampleExact}{$q \in \R^d$}\Comment{Lemma~\ref{lem:sample_exact}}
    \State  \Return $\textsc{SubSampleExact}(q, 1, n)$
\EndProcedure
\State {\bf end data structure}
\end{algorithmic}
\end{algorithm}


\begin{algorithm}[!ht]\caption{Formal version of Algorithm~\ref{alg:jl_guarantees}. JL sketch for approximate Mahalanobis distance estimation. We remark that the proof on page 5 mentions Line~\ref{alg:jl_sketch_init_matrix_mul} and Line~\ref{alg:jl_sketch_query_matrix_mul} which are in fact with respect to the Formal version (this Algorithm~\ref{alg:jl_guarantees_full}). Line~\ref{alg:jl_sketch_init_matrix_mul} in Algorithm~\ref{alg:jl_guarantees_full} is corresponding to Line~\ref{alg:jl_sketch_init_matrix_mul:informal} in Algorithm~\ref{alg:jl_guarantees} (This step takes $O((d+m)k)$ time). Line~\ref{alg:jl_sketch_query_matrix_mul} in Algorithm~\ref{alg:jl_guarantees_full} is corresponding to Line~\ref{alg:jl_sketch_query_matrix_mul:informal} in Algorithm~\ref{alg:jl_guarantees} (This step takes $O((d+m)k)$ time).   }
\label{alg:jl_guarantees_full}
\begin{algorithmic}[1]
\State {\bf data structure} {\sc JLMonCarMaintenance} \Comment{Theorem~\ref{thm:jl_guarantees}}
\State {\bf members}
\State \hspace{4mm} $d, k, m, \in \mathbb{N}_{+}$ \Comment{$d$ is the dimension of data, $m$ is the sketch size, $n$ is the number of points}
\State \hspace{4mm} $U \in \R^{k \times d}$
\State \hspace{4mm} $\{\tilde{x}_{i}\} \in \R^{m}$ for $i \in [n]$ \Comment{Sketch of the $n$ points}
\State {\bf end members}
\State
\Procedure{Initialize}{$U \in \R^{k \times d}, \{x_i\}_{i\in [n]} \subset \R^d, \epsilon \in (0,1), \delta \in (0,1)$} 
    \State $m \gets  \Omega(\frac{\log n }{\epsilon^2})$  \Comment{Initialize the sketch size}
    \State $U \gets U$
    \For{$i = 1 \to n$}
        \State $x_i \gets x_i$
    \EndFor
    \State Let $\Pi \in \R^{m \times k}$ with entries drawn iid from $\mathcal{N}(0,1/m)$. \Comment{AMS or CountSketch also valid}
    \For{$i \in [n]$}
        \State $\tilde{x}_{i} \gets \Pi Ux_i$ \label{alg:jl_sketch_init_matrix_mul} \Comment{This is corresponding to Line~\ref{alg:jl_sketch_init_matrix_mul:informal} in Algorithm~\ref{alg:jl_guarantees}}
    \EndFor
\EndProcedure
\State
\Procedure{Query}{$q \in \R^d$}
    \For{$i \in [n]$}
        \State $ \tilde{d}_{i} \gets \|\Pi Uq-\tilde{x}_{i}\|_2$\label{alg:jl_sketch_query_matrix_mul} \Comment{This is corresponding to Line~\ref{alg:jl_sketch_query_matrix_mul:informal} in Algorithm~\ref{alg:jl_guarantees}}
    \EndFor
    \State \Return $\{\tilde{d}_i\}_{i=1}^n$
\EndProcedure
\State {\bf end data structure}
\end{algorithmic}
\end{algorithm}

\section{Proofs for Online Adaptive Mahalanobis Distance Maintenance}\label{sec:proof_running_time}
 
In this section, we provide lemmas and proofs for the initialization, update, query, and query pair operations.
Algorithm \ref{alg:jl_guarantees_full} is the preliminary version of approximate Mahalanobis distance estimation based on JL sketch. The \textsc{Initialize} operation sample a sketch matrix $\Pi \in \R^{m \times k}$ and compute $\wt{x}_i = \Pi U x_i$ for all $i \in [n]$. The \textsc{Query} operation compute the estimated distance by $\wt{d}_i = \|\Pi U q - \wt{x}_i\|_2$ for all $i \in [n]$.

\begin{algorithm}[!ht]\caption{Formal version of Algorithm~\ref{alg:query}. Online Adaptive Mahalanobis Distance Maintenance: queries.}\label{alg:query_full}
\begin{algorithmic}[1]
\State {\bf data structure} {\sc MetricMaintenance} \Comment{Theorem~\ref{thm:main}}
\Procedure{QueryPair}{$i,j \in [n]$}\label{alg:query_pair}\Comment{Lemma~\ref{lem:query_pair}}
    \State $R \gets O(\log(n/\delta))$ \Comment{Number of sampled sketches}
    \State {\bf for }{$r = 1 \to R$}\label{alg:query_pair_for_loop} 
        \State \hspace{8mm} $p_r \gets \| \tilde{x}_{i,r} - \tilde{x}_{j,r} \|_2$ 
    \State{\bf end for}
    \State $p \gets \Median_{r \in [R]}\{ p_r\}$ \label{alg:query_pair_median}
    \State \Return $p$
\EndProcedure
\State
\Procedure{QueryAll}{$q \in \R^d$}\label{alg:query_all_procedure} \Comment{Lemma~\ref{lem:query_all}}
    \State $R \gets O(\log(n/\delta))$  \Comment{Number of sampled sketches}
    \State Sample $j_1, j_2, \dots, j_R$ i.i.d. with replacement from $[L]$.\label{alg:query_all_sample_j} 
    \For{$i = 1 \to n$}
        \For{$r = 1 \to R$}
            \State $ d_{i,r} \gets \|\Pi_{j_r}Uq-\tilde{x}_{i,r}\|_2$\label{alg:query_all_norm} 
        \EndFor
    \EndFor
    \For{$i = 1 \to n$}
        \State $\tilde{d}_i \gets \Median_{r\in [R]} \{ d_{i,r} \}$\label{alg:query_all_median}
    \EndFor
    \State \Return $\{\tilde{d}_i\}_{i=1}^n$
\EndProcedure
\State {\bf end data structure}
\end{algorithmic}
\end{algorithm}

In Algorithm~\ref{alg:mahalanobis_maintenance_full}, \textsc{Initialize} samples $L = O((d+ \log \frac{1}{\delta}) \log(d/\epsilon))$ sketching matrice and compute the $\wt{x}_{i,j} = \Pi_j U x_i$ for all $n$ data points and $L$ sketching matrice. \textsc{UpdateU} uses the sparse update matrix $B$ to update the distance matrix $U$ and  $\wt{x}_{i,j}$. \textsc{UpdateX} updates the corresponding $\wt{x}_{i, j}$ with the new data point $z$ and target index $i$ for all sketching matrix $j \in [L]$.

In Algorithm~\ref{alg:query_full}, \textsc{QueryPair} samples $R = O(\log(n/\delta))$ sketching matrix indexes and uses them to compute a list of estimated distance between $x_i$ and $x_j$ and output the median in the end. \textsc{QueryAll} samples $R = O(\log(n/\delta))$ sketching matrix indexes and compute $ d_{i,r} = \|\Pi_{j_r}Uq-\tilde{x}_{i,r}\|_2$ for all data point indexes $i \in [n]$ and sketching matrix indexes $r \in [R]$. Then \textsc{QueryAll} outputs the median value $\tilde{d}_i \gets \Median_{r\in [R]} \{ d_{i,r} \}$ as the estimated distance between $q$ and $x_i$ for all data point indexes $i \in [n]$.


\begin{algorithm}[h]\caption{Formal version of Algorithm~\ref{alg:mahalanobis_maintenance}. Mahalanobis Pseudo-Metric Maintenance: members, initialize and update}
\label{alg:mahalanobis_maintenance_full}
\begin{algorithmic}[1]
\State {\bf data structure} {\sc MetricMaintenance} \Comment{Theorem~\ref{thm:main}}
\State {\bf members}
\State \hspace{4mm} $L, m, k$\Comment{$L$ is the number of sketches, $m$ is the sketch size }
\State \hspace{4mm} $ d, n\in \mathbb{N}_{+}$ \Comment{ $n$ is the number of points, $d$ is dimension} 
\State \hspace{4mm} $U \in \R^{k \times d}$
\State \hspace{4mm} $x_i \in \R^d$ for $i \in [n]$
\State \hspace{4mm} $\{\tilde{x}_{i,j}\} \in \R^{m}$ for $i \in [n], j \in [L]$ \Comment{The sketch of data points}
\State \hspace{4mm} $\textsc{Tree}$  $\textsc{tree}$ \Comment{Segment tree(Alg~\ref{alg:segment_tree}) $\textsc{tree}.\textsc{Query}(i, j)$ to obtain $\sum_{\ell = i}^{j} x_{\ell}$}
\State {\bf end members}
\State
\Procedure{Initialize}{$U \in \R^{k \times d}, \{x_i\}_{i \in [n]} \subset \R^d, \epsilon \in (0,1), \delta \in (0,1)$} \Comment{Lemma~\ref{lem:init}}
    \State $m \gets O(\frac{1}{\epsilon^2})$  \Comment{Initialize the sketch size}
    \State $L \gets O((d+ \log \frac{1}{\delta}) \log(d/\epsilon))$ \Comment{Initialize the number of copies of sketches}
    \State $U \gets U$
    \For{$i = 1\to n$} \label{alg:init_x_assign_start}
        \State  $x_i \gets x_i$ 
    \EndFor \label{alg:init_x_assign_end}
    \State For $j \in [L]$, let $\Pi_j \in \R^{m \times k}$ with entries drawn iid from $\mathcal{N}(0,1/m)$.\label{alg:init_pi_drawn} 
    \For{$i = 1\to n$}\label{alg:init_assign_tilde_x_for_start}
        \For{$j = 1\to L$}
            \State  $\tilde{x}_{i,j} \gets \Pi_jUx_i$\label{alg:init_assign_tilde_x}
        \EndFor
    \EndFor \label{alg:init_assign_tilde_x_for_end} 
    \State $\textsc{tree}.\textsc{Init}(\{x_i\}_{i \in [n]}, \textsc{tree}.t, 1, n)$ \Comment{Initialize the segment tree using all data points.}
\EndProcedure
\State
\Procedure{UpdateU}{$u \in \R^d, a \in [k]$} \Comment{We can consider sparse update for $U$, Lemma~\ref{lem:update}}.
    \State $B \gets \{0\}_{k \times d}$
    \State $B_{a} \gets u^\top$\label{alg:update_b_a} \Comment{$B_a$ denotes the $a$'th row of $B$.}
    \State $U \gets U + B$ \label{alg:update_U_add_B}
    \For{$i = 1\to n$}\label{alg:update_x_i_j_loop_start}
        \For{$j = 1\to L$}
            \State  $\tilde{x}_{i,j} \gets \tilde{x}_{i,j} + \Pi_jBx_i$ \label{alg:update_x_i_j}
        \EndFor
    \EndFor\label{alg:update_x_i_j_loop_end}
\EndProcedure
\State
\Procedure{UpdateX}{$z \in \R^d, i \in [n]$} \Comment{ Lemma~\ref{lem:updateX}}.
    \State $x_i \gets z$
    \For{$j = 1\to L$}\label{alg:updateX_start_loop}
        \State  $\tilde{x}_{i,j} \gets \Pi_j U z$ 
    \EndFor\label{alg:updatex_end_loop}
    \State $\textsc{tree}.\textsc{Update}(\textsc{tree}.t, i, z)$
\EndProcedure

\State {\bf end data structure}
\end{algorithmic}
\end{algorithm}

\section{Sampling}\label{sec:approx_sample}

In this section, we provide lemmas to support the sampling from the $n$ data points based on their distances to the query point $q$. 
We defer the data structure description to Algorithm~\ref{alg:segment_tree} and Algorithm~\ref{alg:sample_exact} in the Appendix due to space limitation. It allows querying which of the stored segments contain a given point.
The goal of Algorithm~\ref{alg:sample_exact} is to implement \textsc{SampleExact} functionality.
\textsc{SubSum} operation can leverage the segment tree data structure to obtain the $\sum_{\ell = i}^{j} x_l$ in only $O(\log(n))$ time and compute the  $\sum_{l=i}^{j} \| q - x_l \|_A$ in $O(\log n + kd)$ time.
\textsc{SampleExact} executes like a binary-search fashion and calls \textsc{SubSum} operation for $O(\log(n))$ times to determine the final data point index.

\begin{lemma}[SubSum]\label{lem:subsum}
Given a query point $q \in \R^{d}$, two indexes $i \in [n], j \in [n]$ and $i < j$, \textsc{SubSum} output $\sum_{l=i}^{j} \| q - x_l \|_A$ in $O(\log n + kd)$ time.
\end{lemma}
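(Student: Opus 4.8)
The plan is to split the argument into a correctness part and a running-time part, and to prove correctness by expanding the range sum into a handful of aggregates, each cheap to evaluate once the segment tree supplies the coordinate-wise sum of the relevant data points. A preliminary observation is that for the approach to work at all, the quantity $\|q-x_\ell\|_A$ appearing in the statement must be read as the quadratic form $(q-x_\ell)^\top A(q-x_\ell)$ (consistent with the pseudo-metric definition $d_A(x,y)=(x-y)^\top A(x-y)$); a literal sum of square roots is nonlinear and could never be recovered from a range aggregate in $O(\log n + kd)$ time. First I would write $A = U^\top U$ and use $(q-x_\ell)^\top A (q-x_\ell) = \|U(q-x_\ell)\|_2^2$, so that for the index range $[i,j]$,
\begin{align*}
\sum_{\ell=i}^{j}(q-x_\ell)^\top A (q-x_\ell) = (j-i+1)\|Uq\|_2^2 - 2 q^\top U^\top U s + \sum_{\ell=i}^{j}\|Ux_\ell\|_2^2,
\end{align*}
where $s \coloneqq \sum_{\ell=i}^{j} x_\ell$. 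This identity isolates exactly the three quantities the routine must produce.

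Next I would invoke the segment tree (Algorithm~\ref{alg:segment_tree}): the call $\textsc{tree}.\textsc{Query}(i,j)$ returns $s = \sum_{\ell=i}^{j} x_\ell$ by combining $O(\log n)$ stored node-sums, which is where the $O(\log n)$ term in the running time originates. Given $s$, the constant term $(j-i+1)\|Uq\|_2^2$ requires one product $Uq$, and the cross term $-2q^\top U^\top U s$ requires $Uq$ and $Us$ followed by a single inner product in $\R^k$; each matrix-vector product costs $O(kd)$ and each $k$-dimensional norm or inner product costs $O(k)$. Summing these, the dominant cost is $O(kd)$, so together with the tree query the total is $O(\log n + kd)$, matching the claim.

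\textbf{The main obstacle} I expect is the self-term $\sum_{\ell=i}^{j}\|Ux_\ell\|_2^2$: unlike the constant and cross terms it is genuinely nonlinear in the individual $x_\ell$ and is \emph{not} recoverable from $s=\sum_\ell x_\ell$ alone (in particular it is not $\|Us\|_2^2$). To preserve the stated time bound I would augment the data structure to maintain this scalar aggregate alongside the vector sums --- either by storing the values $\|Ux_\ell\|_2^2$ in a second segment tree so that their range sum is again an $O(\log n)$ query, or by folding the accumulation into the same traversal --- and then check that \textsc{UpdateU} and \textsc{UpdateX} can refresh these scalars within their existing asymptotic budgets. Once the self-term is available in $O(\log n + kd)$ time, substituting all three aggregates into the identity above yields the returned value, completing both the correctness and the running-time claims.
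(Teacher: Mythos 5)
Your proposal follows the same route as the paper's proof---expand the (squared) quadratic form over the range $[i,j]$ into a constant term, a cross term linear in $s=\sum_{\ell=i}^{j}x_\ell$, and a self-term, and use the segment tree to fetch $s$ in $O(\log n)$ time---but you have correctly identified a genuine error that the paper's own proof commits rather than avoids. The paper writes
\begin{align*}
\sum_{\ell=i}^{j} x_\ell^\top A x_\ell \;=\; \bigl\| U \cdot \textsc{tree}.\textsc{Query}(i,j) \bigr\|_2,
\end{align*}
i.e., it treats the self-term as a function of the vector sum $s$ alone, which is exactly the step you flag as impossible: $\sum_\ell \|Ux_\ell\|_2^2 \neq \|Us\|_2^2$ in general. (The paper's displayed chain also silently switches between $\|\cdot\|_A$ and its square, and flips the sign of the cross term between consecutive lines, which corroborates that this passage was not checked carefully.) Your two preliminary observations---that the statement only makes sense for the squared Mahalanobis distance, since a sum of square roots cannot be a range aggregate, and that the self-term needs its own maintained aggregate---are both correct, and your proposed repair (a second segment tree over the scalars $\|Ux_\ell\|_2^2$, refreshed by \textsc{UpdateX} at cost $O(kd+\log n)$ and by \textsc{UpdateU} via the sparse-row identity $\|(U+B)x_\ell\|_2^2 = \|Ux_\ell\|_2^2 + 2(U_a^{\phantom{\top}}x_\ell)(u^\top x_\ell) + (u^\top x_\ell)^2$ at $O(d)$ per point, within the existing budgets) preserves the stated $O(\log n + kd)$ query time. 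In short: same decomposition and same data-structure idea as the paper, but your version is the one that is actually correct; note also that the error propagates to \textsc{SampleExact}, whose sampling probabilities are computed from \textsc{SubSum} and are therefore based on squared distances rather than the distances $d_i$ claimed in that lemma.
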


\begin{proof}
{\bf Proof of Correctness.} We have:
\begin{align*}
    & ~ \sum_{l=i}^{j} \| q - x_l \|_A \\
    = & ~ \sum_{\ell = i}^{j} q^{\top} A q - 2 q^{\top} A \sum_{\ell = i}^{j} x_{\ell} + \sum_{\ell = i}^{j} x_{\ell}^{\top} A x_{\ell} \\
    = & ~ (j - i + 1) \cdot \| U q\|_2 - 2 q^{\top} U^{\top} U \cdot \textsc{tree}.\textsc{Query}(i, j) +  \\ 
    & ~ \| U \cdot \textsc{tree}.\textsc{Query}(i, j) \|_2 \\
    = &~ (j-i+1) \cdot \| U q \|_2 + 2 q^{\top} U^{\top} U s  + \| U  s\|_2
\end{align*}
Therefore, the \textsc{SubSum} operation can correctly output $\sum_{l=i}^{j} \| q - x_l \|_A$.
$\sum_{l=i}^{j} \| q - x_l \|_A$.

\vspace{2mm}
{\bf Proof of Running Time.} 
The \textsc{SubSum} has four steps:
\begin{itemize}
    \item $s \gets \textsc{tree}.\textsc{Query}(i,j)$ takes $O(\log n)$ time to return $\sum_{\ell = i}^{j} x_{\ell}$ as a segment tree.
    \item  $\| U q\|_2$ takes $O(kd)$ to compute the $U \in \R^{k \times d}$ and $q \in \R^{d}$ multiplication.
    \item  $q^{\top} U^{\top} U s$ takes $O(kd)$ to compute $U \cdot s$, $O(kd)$ time to compute $U^{\top} \cdot (Us)$, and $O(kd)$ time to compute $q^{\top} \cdot (U^{\top} U s)$.
    \item  $\| U s \|_2 $ takes $O(kd)$ time to compute $U \in \R^{k \times d}$ and $s \in \R^{d}$ multiplication.
\end{itemize}
Therefore, the overall time complexity is:
\begin{align*}
    &~O(\log n) + O(kd) + O(kd) + O(kd) + O(kd)\\ 
    =&~ O(\log n + kd)
\end{align*}
\end{proof}

We present how to sample a data point based on the distance between a query point $q \in \R^d$ and the data points in the data structure.

\begin{lemma}[SampleExact]\label{lem:sample_exact}
Given a query point $q \in \R^{d}$, \textsc{SampleApprox} samples an index $i \in [n]$ with probability $d_i / \sum_{j=1}^n d_j$ in $O(\log^2 n + kd \log n)$ time.
\end{lemma}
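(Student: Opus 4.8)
The plan is to prove correctness by induction on the length of the interval $[l,r]$ passed to \textsc{SubSampleExact}, and to bound the running time by the recursion depth times the per-call cost established in Lemma~\ref{lem:subsum}. Throughout, write $d_\ell = \|q - x_\ell\|_A$ for the quantity we wish to sample proportionally to.

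For correctness, I would establish the invariant that for every sub-interval $[l,r] \subseteq [1,n]$, the call \textsc{SubSampleExact}$(q,l,r)$ returns each index $i \in [l,r]$ with probability exactly $d_i / \sum_{\ell=l}^{r} d_\ell$. The base case $l = r$ is immediate: the while-guard fails and the procedure deterministically returns $l$, matching $d_l / d_l = 1$. For the inductive step, a single pass splits $[l,r]$ at the midpoint $m$ and recurses into the left half $[l,m]$ with probability $t = \textsc{SubSum}(q,l,m)/\textsc{SubSum}(q,l,r)$ and into the right half $[m+1,r]$ with the complementary probability $1-t$. By Lemma~\ref{lem:subsum}, \textsc{SubSum}$(q,a,b) = \sum_{\ell=a}^{b} d_\ell$ exactly, so $t = (\sum_{\ell=l}^{m} d_\ell)/(\sum_{\ell=l}^{r} d_\ell)$. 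For an index $i$ in the left half, the induction hypothesis gives conditional probability $d_i / \sum_{\ell=l}^{m} d_\ell$, and multiplying by $t$ cancels the two sums over $[l,m]$, leaving $d_i / \sum_{\ell=l}^{r} d_\ell$; the right half is symmetric. Unrolling the recursion, the probability of reaching a given leaf is a telescoping product of these conditional selection probabilities in which all intermediate partial sums cancel, yielding the stated marginal $d_i / \sum_{j=1}^{n} d_j$.

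For the running time, I would note that each recursive call at least halves the length of the active interval, so the recursion terminates after $T = O(\log n)$ levels. At each level the procedure invokes \textsc{SubSum} a constant number of times (once for the numerator on $[l,m]$ and once for the denominator on $[l,r]$), and by Lemma~\ref{lem:subsum} each such invocation costs $O(\log n + kd)$; the remaining per-level work (computing the midpoint, sampling $v$, and the comparison) is $O(1)$. Multiplying the per-level cost by the depth gives $O(\log n)\cdot O(\log n + kd) = O(\log^2 n + kd\log n)$, as claimed.

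The main obstacle I anticipate is not the algebra of the telescoping product, which is routine, but stating the inductive invariant cleanly and verifying that \textsc{SubSum} returns the \emph{exact} interval sum, so that $t$ is an honest conditional probability with no approximation error — this is precisely what Lemma~\ref{lem:subsum} supplies, so correctness reduces to invoking that lemma plus the cancellation. A secondary point worth a sentence is the reading of the pseudocode: although written as a while-loop, each branch returns immediately, so exactly one iteration executes per invocation and the construct is genuinely a recursion of depth $O(\log n)$.
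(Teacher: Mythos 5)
Your proposal is correct and follows essentially the same route as the paper's own proof: the paper writes the correctness argument directly as the telescoping product of conditional selection probabilities along the recursion path (which is exactly the unrolled form of your induction), and its running-time analysis is identical — $O(\log n)$ levels times the $O(\log n + kd)$ cost of \textsc{SubSum} from Lemma~\ref{lem:subsum}. Your explicit inductive invariant and the remark about the while-loop really being a depth-$O(\log n)$ recursion are slightly more careful presentations of the same argument, not a different one.
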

 
\begin{proof}
{\bf Proof of Correctness.}
In Algorithm~\ref{alg:sample_exact}, we know that \textsc{SampleExact} calls \textsc{SubSampleExact} between range $[1, n]$. At each iteration, \textsc{SubSampleExact} filters out either left or right half of the current range based on the sum of Mahalanobis distance of left half and right half. Therefore, after $T = O(\log n)$ iterations, let $[l_{j}, r_{j}]$ denote the sample range at iteration $j$, and we know the final index $i$ is sampled with the probability :
\begin{align*}
    &~\frac{\sum_{j = l_{1}}^{r_{1}} d_{j}}{\sum_{j = 1}^{n} d_{j}} \cdot \frac{\sum_{j = l_{2}}^{r_{2}} d_{j}}{\sum_{j = l_{1}}^{r_{1 }} d_{j}} \cdot \cdots \cdot \frac{ d_{i}}{\sum_{j = l_{\ell_{T-1} }}^{r_{\ell_{T-1} }} d_{j}} \\
    =&~ d_i / \sum_{j=1}^n d_j
\end{align*}
{\bf Proof of Running Time.} 
The time complexity of \textsc{SampleExact} is dominated by calling \textsc{SubSum} for $T = O(\log n)$ iterations. From Lemma~\ref{lem:subsum}, we know each \textsc{SubSum} operation takes $O(\log n + kd)$ time. Therefore, we have the time complexity of \textsc{SampleExact} is $O(\log^2 n + kd \log n)$

This completes the proof.
\end{proof}

\section{More Experiments}\label{sec:exp_app}

\begin{figure}[!h]
\centering
\subfloat[]{\includegraphics[width = 0.25\textwidth]{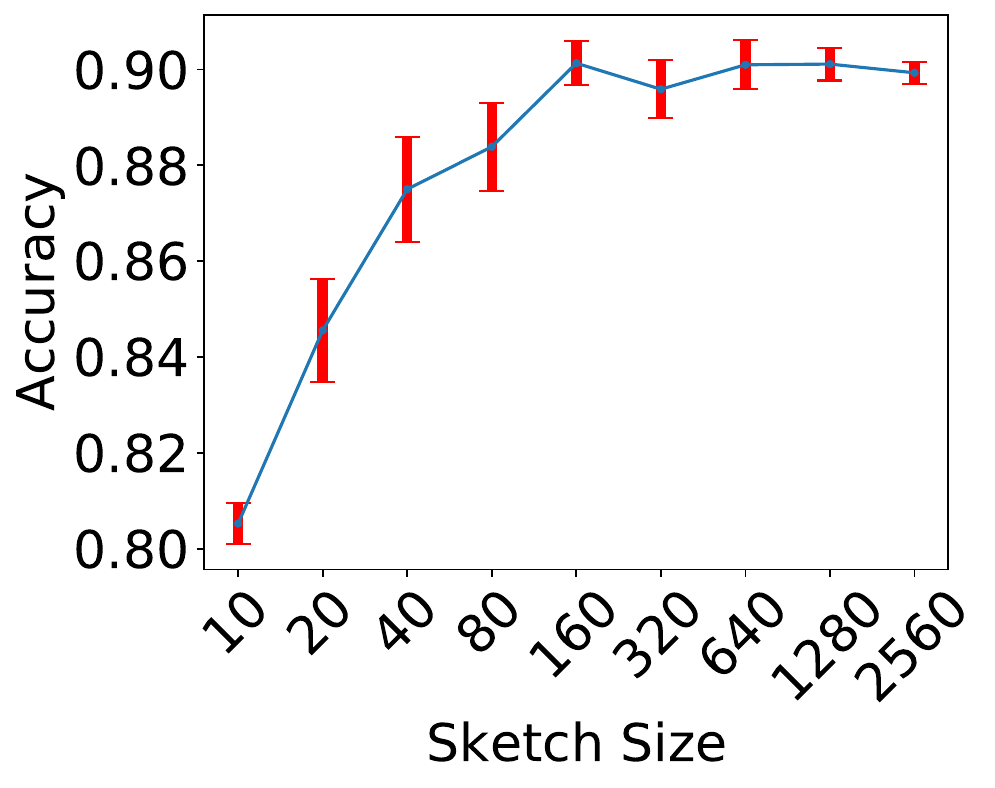}}
\hspace{2mm}
\subfloat[]{\includegraphics[width = 0.25\textwidth]{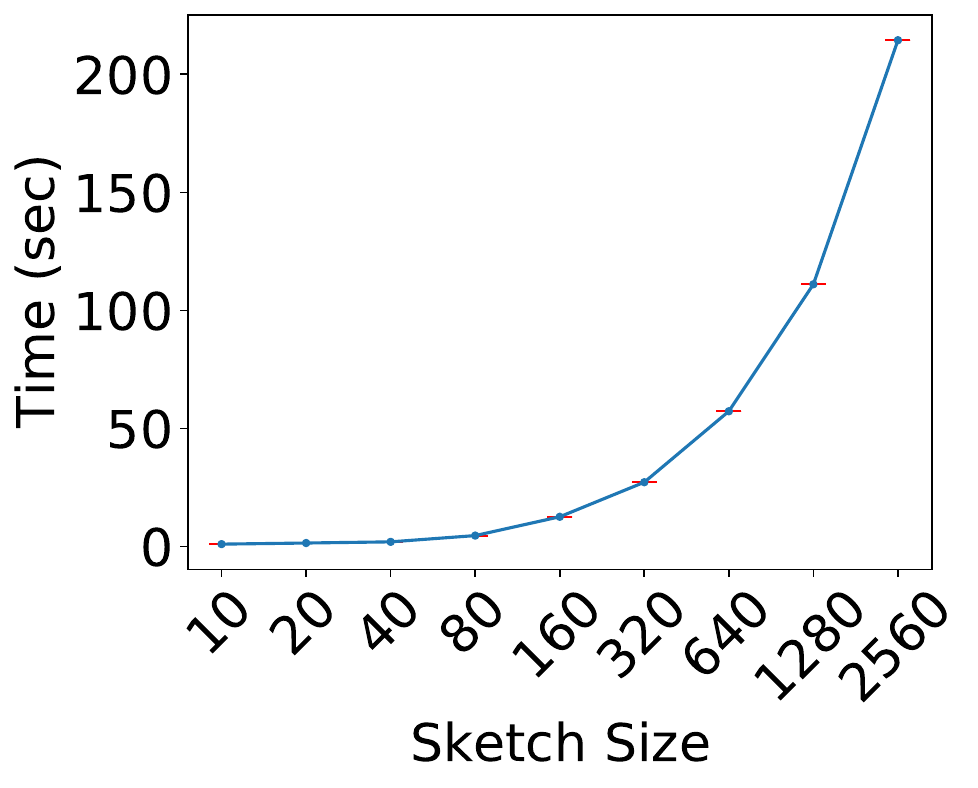}}
\caption{{ Benchmark results for (a) \textsc{QueryAll} accuracy under different $m$. (b) \textsc{QueryAll} time under different $m$.
}}
\label{fig:sketch_size_app}
\end{figure}

\begin{figure}[!h]
\centering
\subfloat[]{\includegraphics[width = 0.25\textwidth]{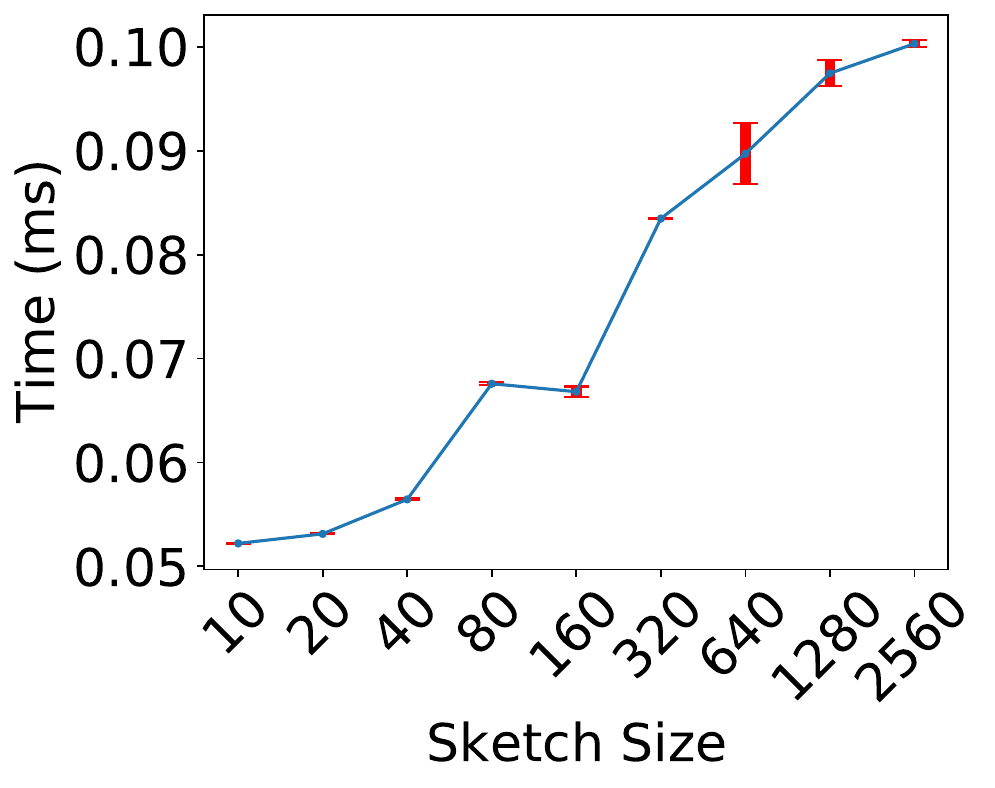}}
\hspace{2mm}
\subfloat[]{\includegraphics[width = 0.25\textwidth]{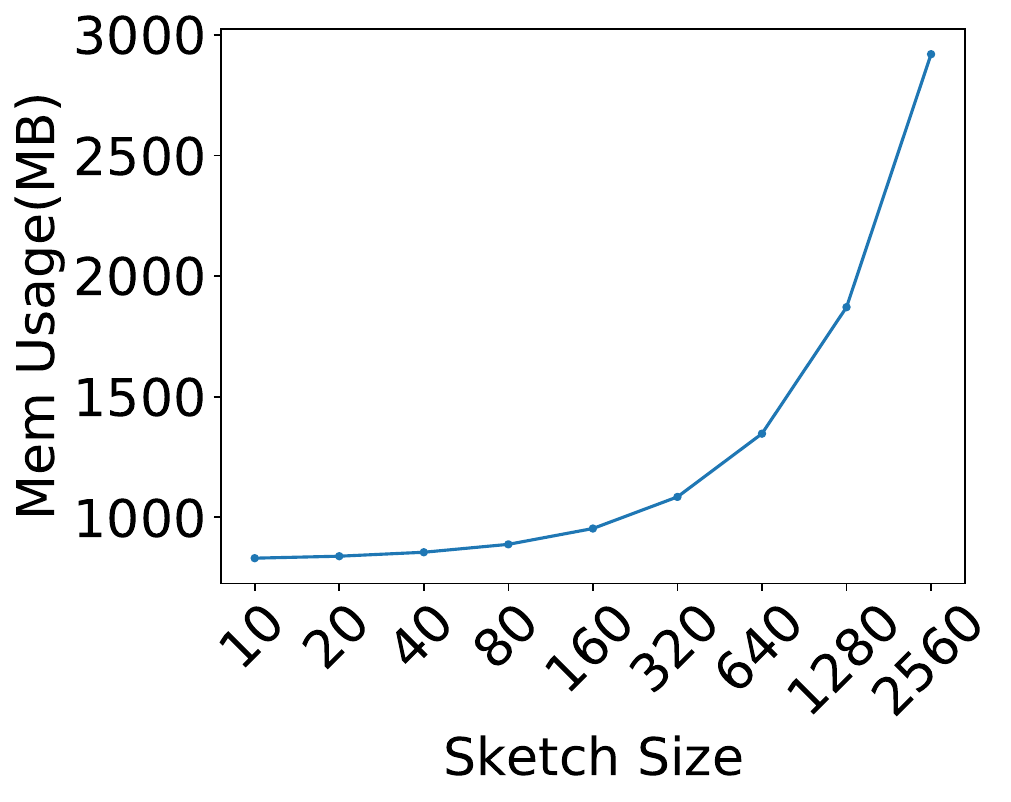}}
\caption{{ Benchmark results for 
(a) \textsc{QueryPair} time under different $m$. (b) Data structure memory usage under different $m$.}}
\label{fig:sketch_size2_app}
\end{figure}

\paragraph{Experiment setup.} We also evaluate our algorithm with $n=10000$ and $d=2560$ uniformly random data points. We run our simulation experiments on a machine with Intel i7-13700K, 64GB memory, Python 3.6.9 and Numpy 1.22.2. We set the number of sketches $L = 10$ and sample $R = 5$ sketches during \textsc{QueryAll}.
We run \textsc{QueryAll} for $10$ times to obtain the average time consumption and the accuracy of \textsc{QueryAll}. We run 
\textsc{QueryPair} for $1000$ times to obtain the average time. The red error bars in the figures represent the standard deviation. We want to answer the following questions:
\begin{itemize}
    \item How is the execution time affected by the sketch size $m$?
    \item How is the query accuracy affected by the sketch size $m$?
    \item How is the memory consumption of our data structure affected by the sketch size $m$?
\end{itemize}

\paragraph{Query Accuracy.} From the part (a) in Figure~\ref{fig:sketch_size_app}, we know that the \textsc{QueryAll} accuracy increases from $80.4\%$ to $90.2\%$ as the sketch size increases from $10$ to $2560$. And we can reach around $90\%$ distance estimation accuracy when the sketch size reaches $160$.

\paragraph{Execution time.} 
From the part (b) in Figure~\ref{fig:sketch_size_app}, \textsc{QueryAll} time grows from $1.05$ second to $214.4$ second as sketch size increases.
From the part (a) in Figure~\ref{fig:sketch_size2_app}, \textsc{QueryPair} time increases from $0.05$ millisecond to $\sim 0.1$ milliseconds as sketch size increases. 
The query time growth follows that larger sketch size leads to higher computation overhead on $\Pi_{j} U q$.
Compared with the baseline whose sketch size is equal to the data point dimension $m=2560$, when sketch size is $160$, the \textsc{QueryAll} is $16.9 \times$ faster and our data structure still achieves $90\%$ estimation accuracy in \textsc{QueryAll}. 

\paragraph{Memory Consumption.} From the part (b) in Figure~\ref{fig:sketch_size2_app}, we find that the memory consumption
increases from $830$MB to $2919$MB as the sketch size increases from $10$ to $2560$. Larger sketch size means that more space is required to store the precomputed $\Pi_j U$ and $\Pi_j U x_i$ matrice. Compared with the baseline sketch size  $m=2560$, when sketch size is $160$, the memory consumption is $3.06 \times$ smaller.  

\end{document}